\documentclass{article}





\usepackage[final,nonatbib]{neurips_2021}

\usepackage[utf8]{inputenc} 
\usepackage[T1]{fontenc}    
\usepackage{hyperref}       
\usepackage{url}            
\usepackage{booktabs}       
\usepackage{amsfonts}       
\usepackage{nicefrac}       
\usepackage{microtype}      
\usepackage{xcolor}

\usepackage{romannum}
\usepackage[inline]{enumitem}
\usepackage{bm}
\usepackage{amsthm}
\usepackage{amsmath}
\usepackage{graphicx}
\usepackage{subcaption}
\usepackage{algorithmic}
\usepackage{algorithm}

\usepackage{float}

\newcommand{\comment}[1]{}
\renewcommand{\b}[1]{\boldsymbol{#1}}
\newcommand{\ck}{c_{\mathrm{sigmoid},k}}
\newcommand{\norm}[1]{\left\lVert#1\right\rVert}
\newcommand{\REAL}{\ensuremath{\mathbb{R}}}
\newcommand{\eps}{\varepsilon}
\newcommand{\br}[1]{\left\{#1\right\}} 
\newtheorem{theorem}{Theorem}
\newtheorem{definition}[theorem]{Definition}
\newtheorem{lemma}[theorem]{Lemma}

\DeclareMathOperator*{\argmax}{arg\,max}

\graphicspath{{./figures/new/sigmoid/}, {./figures/new/Cifar10/}, {./figures/logistic/}}
\begin{document}
	\pagenumbering{arabic}
	\title{Generic Coreset for Scalable Learning of Monotonic Kernels: Logistic Regression, Sigmoid and more}
\author{
  Elad Tolochinsky \\
  University of Haifa, Israel\\
  \texttt{eladt26@gmail.com} \\
   \And
   Ibrahim Jubran \\
   University of Haifa, Israel\\
   \texttt{ibrahim.jub@gmail.com}\\
   \And
   Dan Feldman \\
   University of Haifa, Israel\\
   \texttt{dannyf.post@gmail.com}
}
	\maketitle{}

\newif\ifproofs
\proofstrue
	
\begin{abstract}
	Coreset (or core-set) is a small weighted \emph{subset} $Q$ of an input set $P$ with respect to a given \emph{monotonic} function $f:\mathbb{R}\to\mathbb{R}$ that \emph{provably} approximates its fitting loss $\sum_{p\in P}f(p\cdot x)$ to \emph{any} given $x\in\mathbb{R}^d$. Using $Q$ we can obtain approximation of $x^*$ that minimizes this loss, by running \emph{existing} optimization algorithms on $Q$. In this work we provide: (i) A lower bound which proves that there are sets with no coresets smaller than $n=|P|$ for general monotonic loss functions. (ii) A proof that, under a natural assumption that holds e.g. for logistic regression and the sigmoid activation functions, a small coreset exists for \emph{any} input $P$. (iii) A generic coreset construction algorithm that computes such a small coreset $Q$ in $O(nd+n\log n)$ time, and (iv) Experimental results with open-source code which demonstrate that our coresets are effective and are much smaller in practice than predicted in theory.
\end{abstract}

\section{Introduction}
Traditional algorithms in computer science and machine learning are usually tailored to handle off-line finite datasets that are stored in memory. However, many modern systems do not use this computational model.
For example, GPS data from millions of smartphones, high definition images, YouTube videos, Twitter tweets, or audio signals from smart homes arrive in a streaming fashion. The era of Internet of Things (IoT) provides us with wearable devices and mini-computers that collect data sets that are being gathered by ubiquitous information-sensing mobile devices and wireless sensor networks~\cite{Hellerstein, Segaran,soda13}. 

\textbf{Challenges. }Using such devices and networks pose a series of challenges:\\
\textbf{(i) Limited memory. }In such systems, the input is an infinite stream of batches that may grow in practice to petabytes of raw data, and cannot be stored in memory. Hence, only one-pass over the data and small memory are allowed.\\ \textbf{(ii) Parallel computations. }To leverage the power of multithreading and multiple processing units (as in GPUs), we are required to design variants of our algorithms which can run in parallel.\\
\textbf{(iii) Distributed computations. } If the dataset is distributed among many machines, e.g. on a ``cloud", there is an additional problem of non-shared memory, which may be replaced by expensive and slow communication between the machines. 

\textbf{Weak or no theoretical guarantees. }
Due to the modern computation models above, learning trivial properties of the data may become non trivial, as stated in~\cite{soda13}. These problems are especially common in machine learning applications, where the common optimization problems and models may be, already in the off-line settings NP-hard. The result is neglecting, in some sense, decades of theoretical computer science research, and replacing it by fast heuristics and ad-hoc rules, which are easy to implement under the above constrains, and provide reasonable results. Those heuristics, however, have no theoretical guarantees, either as of running time or of global optimality.

\subsection{Coresets}
Coresets suggest a natural solution or at least a very generic approach to address the above challenges without re-inventing computer science. Coresets have some promising theoretical guarantees, while still leveraging the success of existing heuristics.
Instead of designing, from scratch, a new algorithm to solve the problem at hand, the idea is to provably summarize the data into a small representative subset, and to prove that applying \emph{existing} algorithms, both heuristics and provable methods, on this small summarizations, will yield an output which approximates the result of running the same algorithms on the original (full) data.

In this paper we focus on coresets for monotonic continuous functions, that is: we assume that we are given a set $P$ of $n$ points in $\REAL^d$, and a non-decreasing monotonic functions $f:\REAL\to\REAL_{>0}$. 
For a given error parameter $\eps\in (0,1)$, we wish to compute an \emph{$\eps$-coreset} $Q\subseteq P$, with a weight function $u:Q\to[0,\infty)$ that \emph{provably} approximates the fitting cost of $P$ for every $x\in\REAL^d$, up to a multiplicative factor of $1\pm\eps$, i.e., $(1-\eps)\sum_{p\in P}f(p\cdot x)\leq \sum_{p\in Q}w(p)f(p\cdot x)\leq (1+\eps)\sum_{p\in P}f(p\cdot x)$.
Although it seems rather theoretic, many real world problems can be formulated using such functions, including the Sigmoid, Logistic regression, SVM, Linear classifiers, and Gaussian Mixture Models; see examples in~\cite{feldman2020core}.

\textbf{Coresets and machine learning.}
We can use the notion of coresets as described above for improving the performance of machine learning algorithms. Most machine learning algorithms essentially solve an optimization problem over some set of training data. By constructing a coreset for this training data, we can: (i) greatly reduce the time it takes to train a model, simply by training it on the (small) coreset, and (ii) allow support for streaming, parallel, and distributed data. 
Although the coreset provides guarantees for the approximation of the MSE of the training data, it can be shown that for some problems, a coreset can also provide guarantee for the approximations of the generalizations error. For example, when using Bayesian inference, it was shown in~\cite{huggins2016coresetsLogistic} that a model which is based on coreset for the log likelihood function, has a marginal likelihood which is \emph{guaranteed} to approximate the true marginal likelihood. The same can be shown for maximum likelihood estimation. The popular measure for the goodness of fit of an estimator is the the log-likelihood ratio:  $\ln\Lambda(\hat{\theta}) = \mathcal{L}(\hat{\theta}) - \sup_{\theta \in \Theta} \mathcal{L}(\theta)$. The log-likelihood ratio of a model which is based on a coreset, uniformly approximates the log-likelihood ratio of the full model. 
Furthermore, coresets have been shown to practically improve the generalization error for machine learning algorithms~\cite{huggins2016coresetsLogistic, feldman2011scalable, munteanu2018coresets}

\subsection{Our contribution}

\textbf{(i) }We provide an impossibility bound that proves that, for non-decreasing monotonic loss function, there are no small coresets in general. We do this by providing an example of an input set of points $P$, for which no coreset of size smaller than $|P|$ exists; see Section~\ref{sec:lower_bounds}.

\textbf{(ii) }Following the bound above, we can either give up on the generic coreset paradigm, or add natural assumptions and modifications to the targeted functions $f$. In this paper we choose the second option;
We add a regularization term to the loss function, which, in most cases, is added anyway to avoid overfitting~\cite{scholkopf2002learning,bishop1995training}.
In fact, in some cases, this new term is crucial as some functions are minimized only for $x$ approaching infinity if this term is omitted. For example, the regularization term we add to the sigmoid function is $\norm{x}_2^2/k$, where $k>0$ defines the trade-off between minimizing the function and the complexity of the set of parameters.
While minimizing such functions may still be NP-hard~\cite{vsima2002training}, we prove that a small coreset $Q$ exists for \emph{any} input set $P$, for the sigmoid and logistic regression functions; see Section~\ref{sec:apps}. However, the proof holds for a wider family of functions.

 \textbf{(iii) }We provide a generic algorithm that computes the coreset $Q$ above in $O(nd+n\log n)$ time. 
Unlike most existing works, our algorithm can construct a coreset for the sigmoid and logistic regression functions, as well as a wider set of functions; see Algorithm~\ref{algno}. 

\textbf{(iv) }Open source code for our algorithms is given~\cite{opencode}, along with extensive experimental results on both synthetic and real-world public datasets; see Section~\ref{sec:ER}. 

\subsection{Related Work}\label{sec:related_work}
In \cite{har2006coresets}, Har-Peled shows how to construct a coreset
of one dimensional points sets $(d=1)$ for sums of single variable real valued
functions. In the scope of machine learning most of the research involves
clustering techniques~\cite{feldman2013turning,feldman2012data,feldman2007ptas}
and regressions~\cite{boutsidis2013nearOptimalLeastSquare,dasgupta2009samplingLpReg,zheng2017coresetsKR}.
Several coresets were constructed for unsupervised learning problems
including coresets for Gaussian mixture models~\cite{feldman2011scalable},
and SVM~\cite{tsang2005coreVectorMachine,har2007maximumSvm}. Other works handle general families of supervised learning problems~\cite{tukan2020coresets, maalouf2019fast}.

The work by~\cite{huggins2016coresetsLogistic} introduces lower bounds on the total sensitivity of the logistic regression problem that is used in this paper. It also introduces an upper bound for the total sensitivity and coreset size based on $k$-clustering coresets. However the bounds hold only for input set $P$ from very specific distributions (roughly, when $P$ is well separated into $k$ clusters).

In~\cite{munteanu2018coresets}, a lower bound of $\Omega\left(n/\log n\right)$ points, on the size of a coreset for a two dimensional logistic regression was introduced. To find a coreset, the authors have introduced a measure of the data $\mu$, which depends on the log-ratio between the positive and negative labeled points, and have shown that for data sets in which $\mu$ is sufficiently small a coreset of size $O(poly(\log n))$ exist. 
Instead of imposing assumptions on the above input-related measure, in this work we add a regularization term to the loss function which, as we show, makes the coreset construction task feasible. 
There does not seem to be a direct relation between our work and the measure $\mu$ used in~\cite{munteanu2018coresets}.

The main tool of this work uses the unified framework presented in~\cite{feldman2011unified}, which was recently improved in~\cite{braverman2016new}. We also use the reduction from $\mathcal{L}_\infty$ coresets that approximates $\max_{p\in P} f(p\cdot x)$ to our $\mathcal{L}_1$ coreset (sum of loss) which was introduced in~\cite{varadarajan2012near}.

\subsection{Paper Organization}
Section~\ref{sec:pre} describes preliminary results which we utilize in our coreset construction algorithm. 
In Section~\ref{sec:lower_bounds} we give examples of input sets which have no non-trivial coreset (i.e., smaller than the input size), for general monotonic functions. 
In Section~\ref{sec:coreset_for_mono} we introduce our main coreset construction algorithm. We then prove the correctness of this algorithm for the sigmoid and logistic regression activation functions.
In Section~\ref{sec:ER} we provide our experimental results along with a discussion.

\section{Preliminaries} \label{sec:pre}
In what follows we first describe the coreset construction framework of~\cite{feldman2011unified}. The framework is based on a non-uniform sampling of the input, according to some importance distribution over the input points. This distribution assigns higher values to points of higher influence
on the optimization problem at hand. 
Now, in order to keep the sample unbiased, the sampled points are reweighted reciprocal to their sampling probability. 
To quantify the influence of a single point on the optimization problem, Feldman and Langberg suggested in~\cite{LS10} a term called~\emph{sensitivity}, which we define later in this section.
Using the sensitivity, a sampling-based coreset can be constructed, whose size depends on the total sensitivity over the input points, a complexity measure of the family of models, called the VC-dimension, and an error parameter $\varepsilon \in (0,1)$ that controls the trade-off between coreset size and approximation accuracy.
Bounding the VC-dimension of the loss functions handled in this paper is straightforward; see formal details in Section~\ref{sec:VCBound} at the supplementary material. Hence, the majority of the paper is devoted to bound the sensitivity of each point.

We now formally define the sensitivity of every input point, with respect to a given problem at hand.
\begin{definition}[Sensitivity~\cite{feldman2011unified,LS10}]\label{def:sensitivity}
Let $(P,w,X,c)$ be a tuple called \emph{query space}, where $P$ is a finite set of elements, $w:P\to [0,\infty)$ is a weight function, $X$ is a set called \emph{queries} (models), and $c:P\times X \to [0,\infty)$ is a loss function.
The \emph{sensitivity} of a point $\b{p}\in P$ with respect to $(P,w,X,c)$ is defined as
\[
s(p):=s_{P,w,X,c}\left(\b{p}\right)=\sup_{\b{x}\in X}\frac{w\left(\b{p}\right)c\left(\b{p},\b{x}\right)}{\sum_{\b{p'}\in P}w\left(\b{p'}\right)c\left(\b{p'},\b{x}\right)},
\]
where the supremum is over every $\mathbf{x}\in X$ such that the denominator is positive .
The \emph{total sensitivity} of the query space is denoted by $t(P):=t(P,w,X,c)=\sum_{\b{p}\in P}s(\b{p})$.
\end{definition}

One of the contributions of~\cite{feldman2011unified} is to establish a connection to the theory of range spaces and the well known VC-dimension. Informally, the (VC) dimension of a given problem is a measure of its combinatorial complexity~\cite{anthony2009neural}. For completeness, a formal definition is given in the supplementary material; see Section~\ref{sec:VCBound}.

Feldman and Langberg also show how to compute, without further assumptions, a small weighted set $(Q,u)$, where $Q\subseteq P$, that will approximate the total cost $C\left(P,w,\b{x}\right)$ of the input $(P,w)$, for every query $\b{x} \in X$, up to a multiplicative factor of $1\pm\varepsilon$. Such a set, which we call a \emph{coreset}, is defined as follows.
\begin{definition}	[$\varepsilon$-coreset\label{coresetdef}] 
Let $\left(P,w,X,c\right)$ be a query space (see Definition~\ref{def:sensitivity}), and $\eps\in(0,1)$ be an error parameter. An \emph{$\varepsilon$}-\emph{coreset} for $\left(P,w,X,c\right)$ is a weighted set $\left(Q,u\right)$ such that for every $\b{x} \in X$,
\[
\left|\sum_{\b{p}\in P}w\left(\b{p}\right)c\left(\b{p},\b{x}\right)-\sum_{\b{q}\in Q}u\left(\b{q}\right)c\left(\b{q},\b{x}\right)\right| \leq\varepsilon \cdot \sum_{\b{p}\in P}w(\b{p})c(\b{p},\b{x}).
\]
\end{definition}
In~\cite{feldman2011unified}, a lower bound is given for the required coreset size, as a function of the total sensitivity $t(P)$. This bound was later made tighter in~\cite{braverman2016new}. The following theorem describes the random sampling scheme for coreset construction using the sensitivity framework, and describes the required sample (coreset) size.
\begin{theorem}[coreset construction~\cite{braverman2016new,feldman2011unified}]\label{theorem:sens_is_coreset} Let $\left(P,w,X,c\right)$
be a query space of VC-dimension $d$ and total sensitivity $t$. Let $\eps,\delta\in(0,1)$. Let $Q$ be a random sample of $\left|Q\right|\ge\frac{10t}{\varepsilon^{2}}\left(d\log t+\log\left(\frac{1}{\delta}\right)\right)$
i.i.d points from $P$, such that every $\b{p}\in P$ is sampled with probability $\frac{1}{t}\cdot s_{P,w,X,c}\left(\b{p}\right)$.
Let $u\left(\b{p}\right)=\frac{t\cdot w\left(\b{p}\right)}{s_{P,w,X,c}\left(\b{p}\right)\left|Q\right|}$
for every $\b{p}\in Q$. Then, with probability at least
$1-\delta$, $\left(Q,u\right)$ is an $\varepsilon$-coreset of $\left(P,w,X,c\right)$.
\end{theorem}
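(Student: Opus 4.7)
The plan is to reduce the $\varepsilon$-coreset guarantee to a uniform deviation bound over an auxiliary range space, and then apply the classical $\varepsilon$-approximation theorem for range spaces of bounded VC-dimension. The first step is to normalize each summand so that it becomes a bounded random variable whose empirical average approximates $1$. Define $g(\b{p},\b{x})=\frac{t\,w(\b{p})c(\b{p},\b{x})}{s(\b{p})\,C(P,w,\b{x})}$ for every $\b{p}\in P$ and every $\b{x}\in X$ with $C(P,w,\b{x})>0$ (the case $C(P,w,\b{x})=0$ makes the coreset condition trivial). By Definition~\ref{def:sensitivity} we have $g(\b{p},\b{x})\in[0,t]$. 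With the sampling distribution $\Pr[\b{q}=\b{p}]=s(\b{p})/t$ and the weights $u(\b{p})=tw(\b{p})/(s(\b{p})|Q|)$ prescribed in the statement, a direct calculation gives $\mathbb{E}[g(\b{q},\b{x})]=1$ and $C(Q,u,\b{x})/C(P,w,\b{x})=\frac{1}{|Q|}\sum_{\b{q}\in Q}g(\b{q},\b{x})$; thus the $\varepsilon$-coreset condition becomes the statement that this empirical average is within $\varepsilon$ of $1$ \emph{uniformly} over $\b{x}\in X$.

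The second step is to apply Bernstein's inequality together with an $\varepsilon$-approximation theorem for range spaces of bounded VC-dimension. For a fixed query the variance of $g(\b{q},\b{x})$ is at most $\mathbb{E}[g^{2}]\le t\cdot\mathbb{E}[g]=t$, so Bernstein yields a pointwise bound with $|Q|=O((t/\varepsilon^{2})\log(1/\delta))$, which is exactly the source of the \emph{linear} (rather than quadratic) dependence on $t$ in the claim. To upgrade this pointwise bound to a uniform bound, I would apply the Haussler / Li--Long--Talagrand $\varepsilon$-approximation theorem to the auxiliary range space $\mathcal{R}=\{\{\b{p}\in P:g(\b{p},\b{x})\le r\}:\b{x}\in X,\,r\ge 0\}$, whose VC-dimension $d'$ enters as the additive $d'\log t$ factor inside the parenthesis in the sample-size bound.

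The third step is to bound $d'\le d$, the dimension of the original query space. Multiplying both sides of $g(\b{p},\b{x})\le r$ by $s(\b{p})C(P,w,\b{x})/t$ rewrites any range of $\mathcal{R}$ as $\{\b{p}:w(\b{p})c(\b{p},\b{x})\le r_{\b{p}}\}$, where $r_{\b{p}}$ depends on $\b{p}$ only through the positive scalar $s(\b{p})$; a standard argument shows that such positive point-wise re-scalings do not increase the VC-dimension, giving $d'\le d$. The main obstacle is the combination of steps two and three: obtaining linear-in-$t$ sample complexity requires variance-based concentration (Bernstein, not Hoeffding) \emph{together with} a uniform bound that costs only $O(d\log t)$, rather than a na\"\i{}ve union bound. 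Both refinements hinge on the third step --- bounding the VC-dimension of the data-dependently-normalized range space $\mathcal{R}$, whose normalizer $C(P,w,\b{x})$ varies with the query --- which is therefore the most delicate part of the proof.
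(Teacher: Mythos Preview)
The paper does not prove this theorem at all: it is stated with citations to \cite{braverman2016new,feldman2011unified} and used as a black box throughout (e.g., in the proofs of the main coreset theorems for sigmoid and logistic regression), so there is no ``paper's own proof'' to compare against.

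That said, your sketch is a faithful outline of how the cited references establish the result: the normalization $g(\b{p},\b{x})=t\,w(\b{p})c(\b{p},\b{x})/\bigl(s(\b{p})\,C(P,w,\b{x})\bigr)\in[0,t]$ with $\mathbb{E}[g]=1$, the variance bound $\mathrm{Var}[g]\le t$ that yields the linear-in-$t$ dependence via Bernstein, and the reduction to a uniform-deviation bound over a range space of controlled VC-dimension are exactly the ingredients of \cite{braverman2016new}. The one place where you are a bit cavalier is step three: the assertion that positive point-wise rescaling by $s(\b{p})$ ``does not increase the VC-dimension'' is the heart of the matter and is not entirely automatic, since the threshold in the range $\{\b{p}:w(\b{p})c(\b{p},\b{x})\le r\cdot s(\b{p})C(P,w,\b{x})/t\}$ now varies with $\b{p}$. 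In \cite{braverman2016new} this is handled by a direct shattering argument (or, equivalently, by observing that the induced dual function class still has pseudo-dimension at most that of the original), but it deserves more than a one-line ``standard argument'' if you intend to write a self-contained proof. For the purposes of this paper, however, simply citing the theorem---as the authors do---is all that is needed.
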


\section{Lower Bounds}\label{sec:lower_bounds}

In what follows, we consider query spaces $(P,w,\REAL^d,c)$, where $c(\b{x},\b{p}) = f(\b{x} \cdot \b{p})$ for some non-decreasing monotonic function $f$. We prove that not all such query spaces admit a non-trivial coreset, by providing an example of an input sets $P$ for which every coreset must be of size $|P|$.


\paragraph{No coreset.}
Consider a 2-dimensional circle $C \subseteq \REAL^3$ in $3$-dimensional space, which is the intersection of the unit sphere and a non-affine plane (does not pass through the origin) that is parallel to the $XY$ plane.
For every point $\b{p} \in P$, let $\pi_{\b{p}}$ be a plane in $\REAL^3$ that passes through the origin which isolates $\b{p}$ from the rest of the set, and let $\b{x_p}$ be a vector orthogonal to $\pi_{\b{p}}$, such that $\b{p} \cdot \b{x_p} > 0$; see Fig~\ref{fig:P}. Such a plane exists since the points are on a 2D circle that is not centered around the origin. 

Now, for intuition, consider the logistic regression cost function: $c(\b{x}, \b{p}) = \log(1+e^{\b{p} \cdot \b{x}})$. Let $\b{p} \in P$ and let $\b{x_p}$ be the query vector orthogonal to the plane $\pi_{\b{p}}$ which separates $\b{p}$  from the rest of the set. Since $\b{p}$ is the only point on the positive side of $\b{x_p}$, it holds that $\b{p}\cdot \b{x_p} > 0$ whereas for every other point $\b{p'}$, $\b{p'}\cdot \b{x_p} < 0$. Moreover as $\norm{\b{x_p}}$ grows, $\b{p}\cdot \b{x_p}$ goes to $\infty$ and $\b{p'}\cdot \b{x_p}$ grows to $-\infty$. Thus the cost $c(\b{x_p}, \b{p}) = \log(1+e^{\b{p} \cdot \b{x_p}})$ of $\b{p}$, goes to $\infty$ and the cost of every other point goes to $0$. Therefore, $\b{p}$ has a sensitivity of $1$. In this case, intuitively, every coreset must include $\b{p}$ or else it cannot provide a good approximation to the cost of the original (full) set. Since this argument holds for every $\b{p} \in P$, any coreset for $P$ must include all points in $P$. Thus, no non-trivial coreset exists in this case.
Putting, it differently, the above discussion shows that if the sensitivity of every point in $P$ is $1$ then the size of every coreset is $\Omega(n)$; see Lemma~\ref{lemma:sen_is_nec} in the supplementary material for a formal statement.


\begin{figure}[h]
    \centering
    \includegraphics[width=0.45\linewidth]{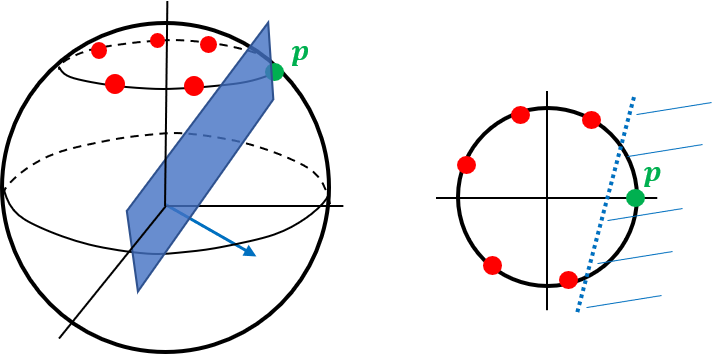}
    \caption{(Left): A set of points $P$ in $\REAL^3$ (red and green points), a plane $\pi_{\b{p}}$ separating $p$ from $P\setminus\br{p}$ and the vector $\b{x_p}$ orthogonal to $\pi_{\b{p}}$. (Right): A top-down view of the data on the left. The dotted line $\ell$ is the intersection of $\pi_{\b{p}}$ and the plane containing $P$. All the points to the right (left) of $\ell$ are projected onto the positive (negative) side of $\b{x_p}$.}
    \label{fig:P}
\end{figure}

Note that the above holds true not only for logistic regression but for any function $f$ that satisfies $\lim_{x\rightarrow\infty}\frac{f\left(-x\right)}{f\left(x\right)}=0$. This is formally stated in the following theorem. A formal proof is given in Section~\ref{sec:noCoreset} of the supplementary material.
\begin{theorem}\label{theorem:No coreset}
Let $f:\REAL\rightarrow\left(0,\infty\right)$
be a non-decreasing monotonic function that satisfies $\lim_{x\rightarrow\infty}\frac{f\left(-x\right)}{f\left(x\right)}=0$,
and let $c\left(\b{x},\b{p}\right)=f\left(\b{x}\cdot\b{p}\right)$
for every $\b{x},\b{p}\in\REAL^{d}$. Let $\varepsilon\in\left(0,1\right)$, $n\geq 1$ be an integer, and $w:\REAL^d\rightarrow\left(0,\infty\right)$. There is a set $P\subset\REAL^{d}$ of $|P|=n$ points such that if $(Q,u)$ is an $\eps$-coreset of $\left(P,w,\REAL^{d},c\right)$ then $Q=P$.
\end{theorem}

\textbf{Adding assumptions. }The above counter-example and formal claim motivate the necessity of adding assumptions on the loss function, as described in the following section. Mainly, a regularization term needs to be added. This term is usually added anyway, both in theory and in practice, to reduce the complexity of the model and avoid overfitting.

\section{Coresets For Monotonic Bounded Functions}\label{sec:coreset_for_mono}
From the previous section, we conclude that an additional constraint must be imposed on the problem at hand in order to construct a small coreset.
To better understand the required constraint, recall the reason for the lower bound from the example at Section~\ref{sec:lower_bounds}; the (problematic) points with sensitivity 1 were the points which had very large values of $\b{x}\cdot \b{p}$. This can happen when $\norm{\b{x}}$ is very large or when $\norm{\b{p}}$ is large. For the moment, assume that $\norm{\b{p}}$ is small (we will later see how $\norm{\b{p}}$ affects the size of the coreset). 
The standard technique for preventing the parameters from growing too large is to add a regularization term, which is widely used in many real world problems~\cite{scholkopf2002learning, kukavcka2017regularization}. As it happens to be, adding a regularization term also advances us towards our goal of constructing a coreset, as was also noted e.g., in~\cite{samadian2020unconditional, tukan2021coresets}. To see this, consider a regularized variant of the loss function: $c(\b{x}, \b{p}) = f\left(\b{x}\cdot\b{p}\right) + \frac{\norm{\b{x}}}{k}$. Since $f$ is bounded, when $\norm{\b{x}}$ grows to infinity the value of the regularization dominates the loss. Thus, in this case, all points have approximately the same loss, and are all equally unimportant. In other words, the sensitivity of those points can not be $1$.

The common case for the value of the regularization parameter $k$ is $k = n^{1-\kappa}$ for $\kappa \in (0,1)$; see e.g., in~\cite{curtin2019coresets,mai2021coresets}. In practice, we observed that the values of $k$ have only a small effect on the coresets approximation accuracy; see Section~\ref{sec:ER}.

\subsection{$\mathcal{L}_\infty$ coresets} \label{sec:L_inf_coreset}
We now address the common case, in which for some $\b{x} \in X$ and every two points, $\b{p}_1$, $\b{p}_2 \in P$ the values of $\b{p}_1 \cdot \b{x}$ and $\b{p}_2 \cdot \b{x}$ do not greatly differ. To do so, we will reduce our problem to the problem of constructing an $\mathcal{L}_\infty$ coreset, which is defined as follows.

\begin{definition}($\mathcal{L}_\infty$ coreset\cite{varadarajan2012near})
    Let (P,w,X,c) be a query space and $\eps>0$. An $\eps-\mathcal{L}_\infty$ coreset is a subset $Q \subseteq P$ such that 
    $\max_{\b{p}\in P} c(\b{p}, \b{x}) \le (1+\eps)\max_{\b{q}\in Q} c(\b{q}, \b{x})$ for every $\b{x} \in X$.
\end{definition}

We will now focus on constructing an $\mathcal{L}_\infty$ coreset. We will then show how to leverage this $\mathcal{L}_\infty$ coreset to obtain a coreset as defined in Definition~\ref{coresetdef}.

Consider a monotonic non-decreasing function $f:\REAL \to (0,M]$, a query $\b{x} \in X$ and a point $\b{p} \in P$ such that $\b{p} \cdot \b{x} > 0$. Since $f$ is a monotonic function, $f(0) \leq f(\b{p} \cdot \b{x})$. Hence, 	
\[
    \max_{\b{p}'\in P}f(\b{p}' \cdot \b{x})\le M = \frac{M}{f(0)}f(0) \le \frac{M}{f(0)}f(\b{p}\cdot \b{x}),\label{eq:case1}
\]

Therefore, for a query $\b{x}$, if a point $\b{p}$ falls on the positive side of the line defined by $\b{x}$ we can say this point is an $\mathcal{L}_\infty$ coreset. But what if the point falls on the negative side of the line? Since $f$ is monotonic, we know that if $\b{p} \cdot \b{x} <0$ then, $f(\b{p} \cdot \b{x}) < f(-\b{p} \cdot \b{x})$, but if $f$ is sufficiently ``well behaved'' then as long as the distance between $-\b{p} \cdot \b{x}$ and $\b{p} \cdot \b{x}$ is not too large, then the distance between $f(-\b{p} \cdot \b{x})$ and $f(\b{p} \cdot \b{x})$ is also bounded. Specifically, we can assume there is a constant $b > 0$ such that 
\[
    f(-\b{p} \cdot \b{x}) < b\cdot f(\b{p} \cdot \b{x})
\]
which implies that even if $\b{p}$ falls on the negative side of the line, then $\b{p}$ is an $\mathcal{L}_\infty$ coreset.  

\textbf{Assumptions and conclusions made so far. }Before we conclude the results of this section, we must conduct the assumptions and conclusions we have made so far. We have assumed that the distance between $-\b{p} \cdot \b{x}$ and $\b{p} \cdot \b{x}$ is not too large. We can bound the distance as follows:
\[
    |-\b{p} \cdot \b{x} - (\b{p} \cdot \b{x})| = |2\b{p} \cdot \b{x}| \le 2\norm{\b{x}}\norm{\b{p}}.
\]
From the discussion in the beginning of the section, adding regularization will guarantee that $\norm{\b{x}}$ can not grow arbitrarily large.
As for the $\norm{\b{p}}$ term, we expect the coreset to be somehow affected by this term in order to ensure the above property. Indeed, this is one of the main terms which affect the sensitivity of the input points. Hence, the final coreset will be more likely to choose points with larger norm.

We conclude that every point $\b{p} \in P$ is an $\eps-\mathcal{L}_\infty$ coreset for sufficiently large $\eps$ that depends on properties of the function ($M, f(0)$ and $b$) and on $\norm{\b{p}}$. This is formally stated in the following lemma.
\begin{lemma} [$\mathcal{L}_\infty$ coresets] \label{lemma:exsitance of inf bounded}
Let $P\subset\REAL^{d}$ be a finite set, $M, k>0$ be constants, $f:\REAL\rightarrow(0,M]$
be non-decreasing function and $g:[0,\infty)\rightarrow[0,\infty)$ be a function. For every $\b{x}\in\REAL^{d}$ and $\b{p}\in P$ define $c_{k}\left(\b{p,x}\right)=f\left(\b{p}\cdot\b{x}\right)+\frac{g\left(\left\Vert \b{x}\right\Vert \right)}{k}$.
Put $\b{p}\in P$ and suppose there is $b_{\b{p}} > 0$ such that for every $z>0$, $f\left(\left\Vert \b{p}\right\Vert z\right)+\frac{g\left(z\right)}{k}\le b_{\b{p}}\left(f\left(-\left\Vert \b{p}\right\Vert z\right)+\frac{g\left(z\right)}{k}\right)$.
Then $\br{\b{p}}$ is an $\eps-\mathcal{L}_\infty$ coreset with $\eps = \frac{M}{f\left(0\right)}\left(b_{\b{p}}+1\right)-1$.
\end{lemma}

\newcommand{\alg}{\textsc{Monotonic-Coreset}}

\begin{algorithm}[]
	\begin{algorithmic}[1]
		\STATE {\bfseries Input:} A set $P=\br{p_1,\cdots,p_n}$ of points in $\REAL^d$, \\
        \quad\quad\quad a real valued regularization term $k>0$, and \\
        \quad\quad\quad an integer $m \geq 1$.
		\STATE {\bfseries Output:} A pair $(Q,u)$ where $|Q| = m$ and $u:Q \to [0,\infty)$; see Theorems~\ref{theorem:sigmoid}-\ref{theorem:logistic}.
		
		\STATE Sort the points in $P=\br{\b{p}_1,\cdots,\b{p}_n}$ by their length, i.e., $\norm{\b{p}_1}\leq\cdots\leq \norm{\b{p}_n}$.
		
		\STATE $\displaystyle s(\b{p}_i) := \frac{c\cdot \sqrt{k}\norm{\b{p}_i}+2}{i}$ for every $i \in [n]$ \COMMENT{$c$ is a sufficiently large constant.}
		
        \STATE Set $t\gets \sum_{i=1}^n s(\b{p}_i)$
		
		\STATE Pick an i.i.d random sample $Q \subseteq P$ of $|Q|\geq \min\br{m,n}$ from $P$, where every $\b{p} \in P$ is chosen with probability $s(\b{p})/t$.

		\STATE $\displaystyle u\left( \b{p} \right) := \frac{1}{\left|Q\right| \mathrm{Prob} \left( \b{p} \right)}$ for every $\b{p} \in Q$

		\STATE $\textbf{return} \left(Q, u \right)$
	\end{algorithmic}
	
	\caption{\alg($P,k,m$)\label{algno}}
\end{algorithm}

\subsection{From $\mathcal{L}_\infty$ coresets to coresets} \label{sec:fromLInfToCoresets}
We now describe how to leverage an $\mathcal{L}_\infty$ coreset to bound the sensitivity of every input point.

\textbf{Intuition behind Algorithm~\ref{algno}. }Let $Q$ be an $\eps-\mathcal{L}_\infty$ coreset of $P$. Intuitively, since the points in $Q$ provide a $(1+\eps)$-approximation to the maximal cost, we would require a random sampling scheme to choose these points with relatively high probability (compared to points in $P\setminus Q$). Let $Q_2$ be an $\eps-\mathcal{L}_\infty$ coreset of $P\setminus Q$. Using the same reasoning, we would require the probability of sampling points in $Q_2$ to be greater then the probability of sampling a point in $P\setminus Q \setminus Q_2$, but less than the probability of sampling a point in $Q$. Using this logic, we can continue to construct $\mathcal{L}_\infty$ coresets and remove them from the set of remaining points. The probability of every point $\b{p} \in P$ should intuitively be proportional to $\frac{1}{i}$, where $i$ is the index of the $\mathcal{L}_\infty$ coreset which contains $\b{p}$. Phrasing this differently: for every $\b{p}$, the sensitivity of $\b{p}$ is proportional to $\frac{1}{i}$. In~\cite{varadarajan2012near} it was proven that by repeatedly constructing $\mathcal{L}_\infty$ coresets as described above, one can bound the total sensitivity and construct a coreset. Fig.~\ref{fig:L_infty} illustrates the above reduction.

The following lemma gives the formal statement for the algorithm described above. The lemma is based on Lemma 3.1 in~\cite{varadarajan2012near}. 
\begin{lemma} \label{lem:reduction}
Let $c:\REAL^d \times \REAL^d \to (0, \infty)$. Suppose that for some $\eps \in(0,1)$ there is a non-decreasing function $\Delta_{\eps}(n)$ so that for any $P' \subseteq \REAL^d$ of size $n$ there is an $\eps-\mathcal{L}_\infty$ coreset of size at most $\Delta_{\eps}(n)$ for $(P', \mathbf{1}, \REAL^d, c)$. Then, for any $P \subseteq \REAL^d$ of size $n$ we can compute an upper bound $s(p)$ on the sensitivity $s_{P,\mathbf{1},\REAL^d,c}(p)$ for each $p\in P$, so that $\sum_{p\in P}s_{P,\mathbf{1},\REAL^d,c}(p) \leq (1+\eps)\Delta_{\eps}(n)\ln{n}$. 
\end{lemma}

\begin{figure}[h]
    \centering
    \includegraphics[width=\linewidth]{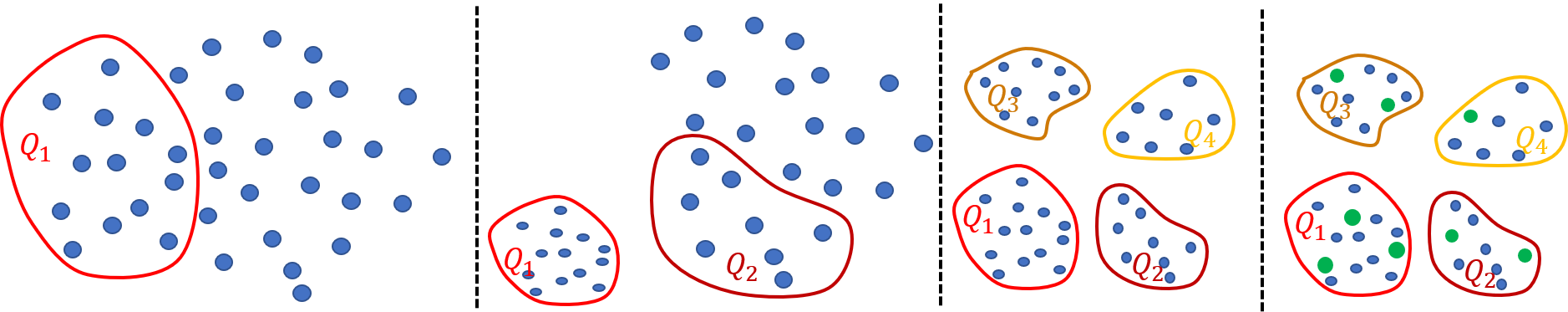}
    \caption{From left to right - \textbf{(i): }Construct an $\mathcal{L}_\infty$ coreset $Q_1$ from $P$. \textbf{(ii): }Remove $Q_1$ from $P$ and construct an $\mathcal{L}_\infty$ coreset $Q_2$ for $P \setminus Q_1$. \textbf{(iii): }Continue to do so until the sets $Q_1,Q_2,\cdots$ cover the entire set $P$. \textbf{(iv): }Sample a subset of $P$, such that every point $\b{p} \in P$ is sampled with probability proportional to $\frac{1}{i}$, where $\b{p} \in Q_i$. The resulting set, after a re-weighting reciprocal to its sampling probability, is a coreset.}
    \label{fig:L_infty}
\end{figure}

\textbf{A minor pitfall.} The algorithm described above, assumes that all of the $\mathcal{L}_\infty$ coresets have the same approximation constant $\eps$. However, this assumption does not hold in our case since the approximation constants of the $\mathcal{L}_\infty$ coresets we have constructed in the previous section depend on $\norm{\b{p}}$. Fortunately, we can still use the same general idea as before: in every iteration of the algorithm we have multiple choices to construct an $\mathcal{L}_\infty$ coreset, we must choose the correct order of construction so the total sensitivity will be the smallest. To understand this optimal order, we need to understand how the sensitivity of a point $s(\b{p})$ depends on the approximation constant $\eps$. As was shown in~\cite{varadarajan2012near}, $s(\b{p})$ linearly depends on $\eps$, or in our case $\norm{\b{p}}$, and on $1/i$ where is $i$ is the index of the point in some ordering. Thus for every point $\b{p}$, $s(\b{p})$ is proportional to $\norm{\b{p}}/i$. To minimize the total sensitivity we will prefer first to choose the points with smaller norms, so that the sensitivity of the points with the larger norms will be divided by a greater constant $i$. Algorithm~\ref{algno} gives a suggested implementation for the algorithm from the discussion above and the following theorem formally states the results.

\begin{theorem} \label{theorem:coreset}
Let $M, k>0$ be constants, $P \subseteq \REAL^d$ be a set of points, $f:\REAL\to (0,M]$ be a monotonic non-decreasing function, and $c_{k}\left(\b{p',x}\right)=f\left(\b{p'}\cdot\b{x}\right)+\frac{g(\norm{x})}{k}$ for every $\b{x}\in\REAL^{d}$ and $\b{p'}\in P$. 
Suppose there is $b:P \to (0,\infty)$ such that for every $\b{p}\in P$ and every $z>0$ we have $f\left(\norm{\b{p}} z\right)+\frac{g\left(z\right)}{k}\le b({\b{p}})\left(f\left(-\norm{\b{p}} z\right)+\frac{g\left(z\right)}{k}\right)$.
Let $b_{\max} \in \argmax_{\b{p} \in P} b({\b{p})}$, $t=(1+\frac{M}{f(0)} b_{\max})\ln n$, and $\eps,\delta\in(0,1)$. Lastly, let $d_{VC}$ be the VC-dimension of $(P,\mathbf{1},\REAL^d,c_k)$.
Then, there is a weighted set $(Q,u)$, where $Q\subseteq P$ and
$|Q| \in O\left(\frac{t}{\varepsilon^{2}}\left(d_{VC}\log t+\log\frac{1}{\delta}\right)\right)$,
such that with probability at least $1-\delta$, $(Q,u)$ is an $\eps$-coreset for the query space $(P,\mathbf{1},\REAL^d,c_k)$.
\end{theorem}

\textbf{Discussion behind Theorem~\ref{theorem:coreset}. }The above theorem suggests a sufficient condition for the existence of a coreset, in the case of a monotonic non-decreasing function $f$, to which a regularization term is added. 
The proof of this theorem is constructive; it combines the above condition with Lemma~\ref{lem:reduction} in order to bound the sensitivity of every input point $\b{p} \in P$ and also gives an upper bound to the total sensitivity; see Section~\ref{sec:coresetSuffCond} of the supplementary material. 
As an example, the following section constructs a coreset for the sigmoid and logistic regression activation functions by proving that the above condition is indeed met. However, the above theorem is not limited to those activation functions, and can be utilized for many other functions.
Given this sensitivity upper bound, the coreset construction algorithm is straightforward: it simply samples the input set $P$ based on the sensitivity distribution, and assigns appropriate weights to the sampled points. The only thing left to determine is the sample size required in order to achieve some predefined approximation error $\varepsilon$.
A suggested implementation for the sigmoid and logistic regression functions is given in Algorithm~\ref{algno}.

\section{Example Applications - Coresets for Sigmoid and Logistic Regression} \label{sec:apps}
In this section, we leverage the framework derived in the previous section in order to construct, as an example, a coreset for the sigmoid and logistic regression activations; see Theorems~\ref{theorem:sigmoid} and~\ref{theorem:logistic} respectively. The full proofs are placed in Section~\ref{sec:mainProofs} of the supplementary material.

\textbf{Overview of Theorems~\ref{theorem:sigmoid}-\ref{theorem:logistic}.} The following theorems construct a coreset for sums of sigmoid functions and for the logistic regression log-likelihood, for normalized input sets. To do so, we: (i) prove that the sufficient condition from Lemma~\ref{lemma:exsitance of inf bounded} and Theorem~\ref{theorem:coreset} in the previous section is met for both the sigmoid and the logistic regression functions; see Lemma~\ref{lemma:L2RegBoundSig_main} and Lemma~\ref{lemma:logistic} respectively. (ii) Based on the sufficient condition, we give an upper bound for the sensitivity of every input point as well as bound the total sensitivity; see Lemma~\ref{lem11} and Lemma~\ref{lem7} respectively. (ii) Lastly, we combine the above with the coreset construction framework from Theorem~\ref{theorem:sens_is_coreset} to obtain a provable sampling algorithm for coreset construction, as formally stated in  Theorems~\ref{theorem:sigmoid}-\ref{theorem:logistic}. An important ingredient in this construction was an upper bound for the VC-dimension of the relevant query spaces. An upper bound of $O(d^2)$ for both functions is given in Section~\ref{sec:VCBound}.
\begin{theorem} \label{theorem:sigmoid}
Let $P$ be a set of $n$ points in the unit ball of $\REAL^d$, $\eps,\delta\in(0,1)$, $k>0$ be a sufficiently large constant, and let $t=(1+k)\log n$.
For every $p,x\in\REAL^d$, let $\ck\left(\b{p},\b{x}\right)=\frac{1}{1+e^{-\b{p}\cdot\b{x}}}+\frac{\left\Vert \b{x}\right\Vert ^{2}}{k}$.
Finally, let $(Q,u)$ be the output of a call to $\alg(P,k,m)$, where $m \in \Omega\left(\frac{t}{\varepsilon^{2}}\left( d^2\ln{t} + \ln{\frac{1}{\delta}} \right)\right)$; see Algorithm~\ref{algno}.
Then, with probability at least $1-\delta$, $(Q,u)$ is an $\eps$-coreset for $(P,\mathbf{1},\REAL^d,\ck)$.
Moreover, $|Q|\in O(m)$, and $(Q,u)$ can be computed in $O(nd+n\log n)$ time.
\end{theorem}

\renewcommand{\ck}{c_{\mathrm{logistic},k}}

\begin{theorem} \label{theorem:logistic}
	Let $P$ be a set of $n$ points in the unit ball of $\REAL^d$, $\eps,\delta\in(0,1)$, $R,k>0$ where $k$ is a sufficiently large constant, and $t=R\log n(1+Rk)$.
	For every $\b{p}\in\REAL^d$,$\b{x}\in B(\b{0}, R)$ let $
	\ck(\b{p},\b{x})=\log\left(1+e^{\b{p}\cdot\b{x}}\right)+\frac{\norm{\b{x}}^{2}}{k}$.
	Finally, let $(Q,u)$ be the output of a call to $\alg(P,k,m)$ where $m \in \Omega\left(\frac{t}{\varepsilon^{2}}\left( d^2\ln{t} + \ln{\frac{1}{\delta}} \right)\right)$; see Algorithm~\ref{algno}. Then, with probability at least $1-\delta$, $(Q,u)$ is an $\eps$-coreset for $(P,\mathbf{1},\REAL^d,\ck)$.
	Moreover, $|Q|\in O(m)$ and $(Q,u)$ can be computed in $O(nd+n\log n)$ time.
\end{theorem}

\textbf{Supporting other activation functions. }The above theorems give two example activation functions that our framework supports. However, the framework is not limited to those activations only. To support other functions, one must prove the sufficient condition to obtain the sensitivity upper bound, which can be then simply plugged into Algorithm~\ref{algno} to obtain the desired coreset.

\section{Experiments}\label{sec:ER}
We implemented Algorithm~\ref{algno} and, in this section, we evaluate its empirical results both on synthetic and real-world datasets.
Rather than competing with existing solvers, our coreset is simply a pre-processing step which reduces the input size. To this end, we apply existing solvers as a black box on our small coreset. The results show that a coreset of size only $1\%$ of the original data can represent the full data with an error $\varepsilon$ smaller than $0.001$. Open-source code can be found in~\cite{opencode}.

\textbf{Competing methods. }Our main competing method is a random sampling scheme. As implied by the theoretical analysis, ``important'' points, i.e., with high sensitivity, are sampled with high probability in our coreset construction algorithm. However, such points are sampled with probability roughly $1/n$ using the naive uniform sampling. Hence, we expect the coreset would yield results much better than a uniform sampling scheme.
With that said, we chose real-world databases with relatively uniform data, in order to demonstrate the effectiveness of our coreset even in such cases. Even in this case, the improvement over uniform sampling is consistent and usually significant.




\textbf{Datasets used. }We used the following datasets:\\
\textbf{(i) Synthetic dataset.} This data contains a set of $n=20,010$ points in $\REAL^2$. $20,000$ of the points were generated by sampling a two dimensional normal distribution with mean $\mu_1=(10,000, 10,000)$ and covariance matrix $\Sigma_1 = \left(\begin{smallmatrix}0.0025 & 0\\ 0 & 0.0025\end{smallmatrix}\right)$ and $10$ points were generated by sampling a two dimensional normal distribution with mean $\mu_2=(-9998, -9998)$ and covariance matrix $\Sigma_2 = \left(\begin{smallmatrix}0.0025 & 0\\ 0 & 0.0025\end{smallmatrix}\right)$.\\
\textbf{(ii) Bank marketing dataset~\cite{moro2014data}. }It contains $n=20,000$ numerical valued records in $d=10$ dimensional space with.
The data was generated for direct marketing campaigns of a Portuguese banking institution. Each record represents a marketing call to a client, that aims to convince him/her to buy a product (bank term deposit). A binary label (yes or no) was added to each record. We used the numerical values of the records to predict if a subscription was made.\\
\textbf{(iii) Wine Quality dataset~\cite{cortez2009modeling,wang2013improving, elidan2010copula, kajino2012convex}. }It contains $n=6497$ numerical valued records in $d=12$ dimensional space. 

\textbf{Experiments. }We conducted the following experiments:\\
\textbf{(i) Sigmoid Activation.}
For a given size $m$ we computed a coreset of size $m$ using Algorithm~\ref{algno}. We used the datasets above to produce coresets of size $5\ln(n) \le m \le20\ln(n)$, where $n$ is the size of the full data, then we normalized the data and found the optimal solution to the problem with values of $k=100, 500, 1000, 5000$ using the BFGS algorithm. We repeated the experiment with a uniform sample of size $m$. For each optimal solution that we have found, we computed the sum of sigmoids and denoted these "approximated solutions" by $C_{1}$ and $C_{2}$ for our algorithm and uniform sampling respectively. The "ground truth" $C^{k}$ was computed using BFGS on the entire dataset. The empirical error is then defined to be $E_t=\left|\frac{C_t}{C^k}-1\right|$ for $t = 1,2$. For every size $m$ we computed $E_1$ and $E_2$ $100$ times and calculated the mean of the results.\\
\textbf{(ii) Logistic Regression.}
Similarly, we produced coresets and uniform samples of size $5\ln(n) \le m \le 40\ln(n)$ and maximized the regularized log-likelihood using the BFGS algorithm.
For every sample size we calculated the negative test log-likelihood. Every experiment was repeated 20 times and the results were averaged. 
All the results are presented in Fig.~\ref{fig:Comparison-between-uniform}.

\begin{figure}[h]
\centering
	\begin{subfigure}{.32\linewidth}
		\centering \includegraphics[width=\linewidth]{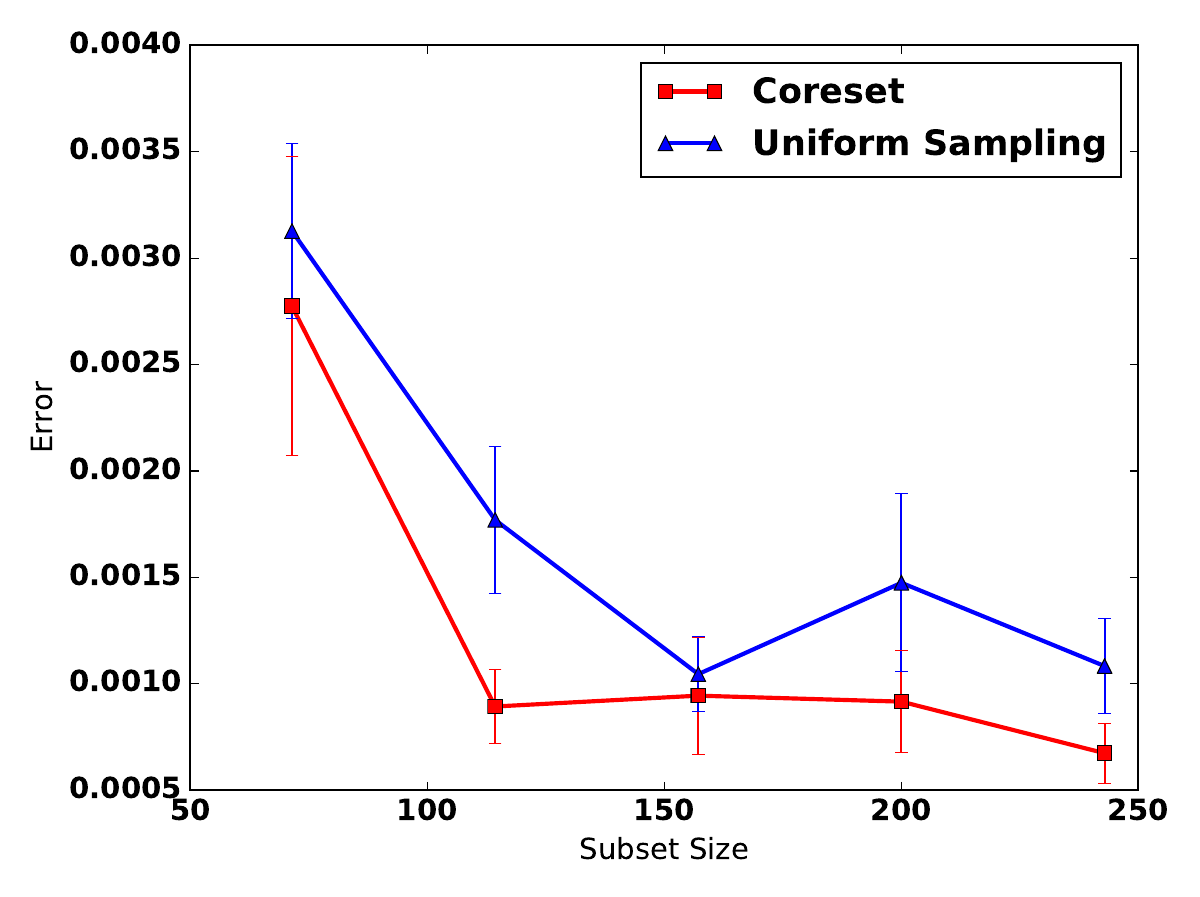}
		\caption{Bank Marketing dataset, $k=100$}
		\label{fig:a}
	\end{subfigure}
	\begin{subfigure}{.32\linewidth}
		\centering \includegraphics[width=\linewidth]{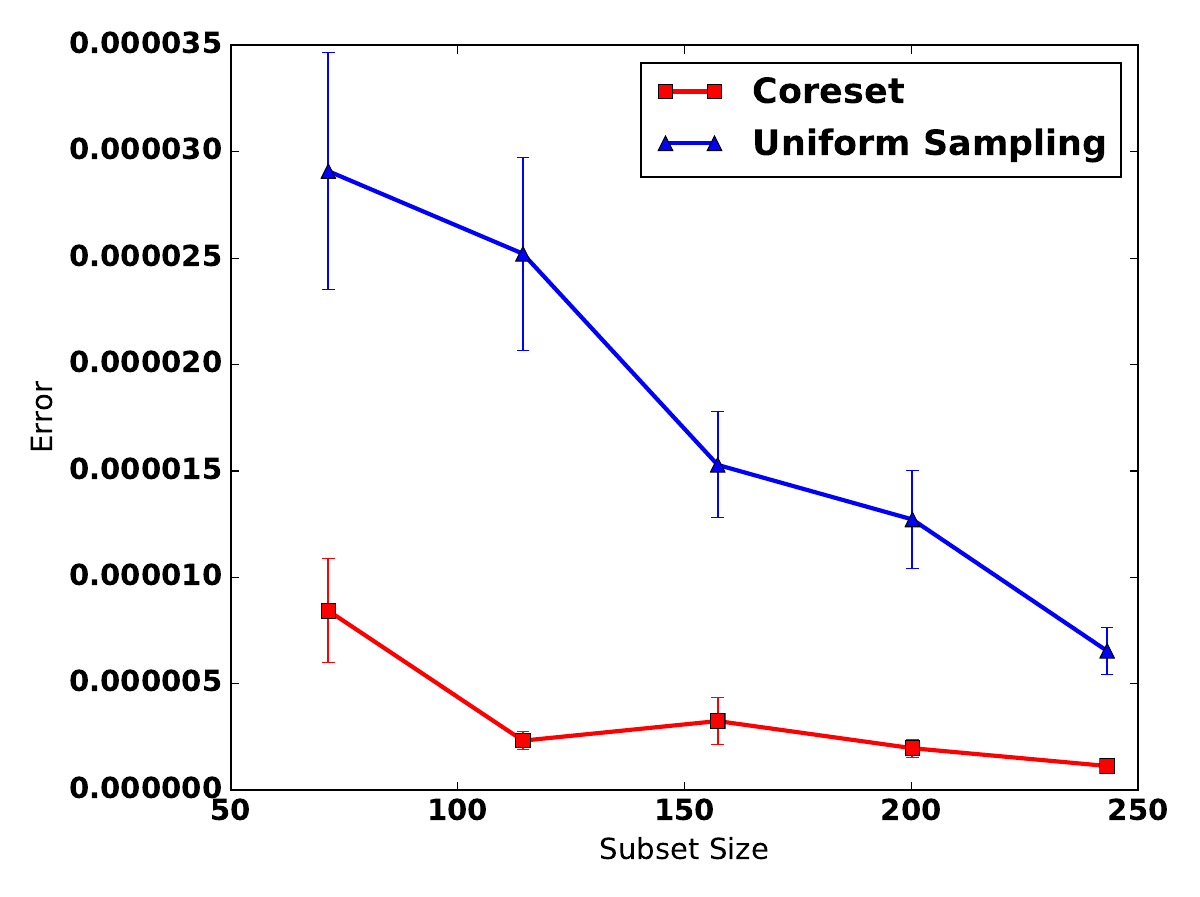}
		\caption{Synthetic dataset, $k = 500$}
		\label{fig:b}
	\end{subfigure}
	\begin{subfigure}{.32\linewidth}
		\centering \includegraphics[width=\linewidth]{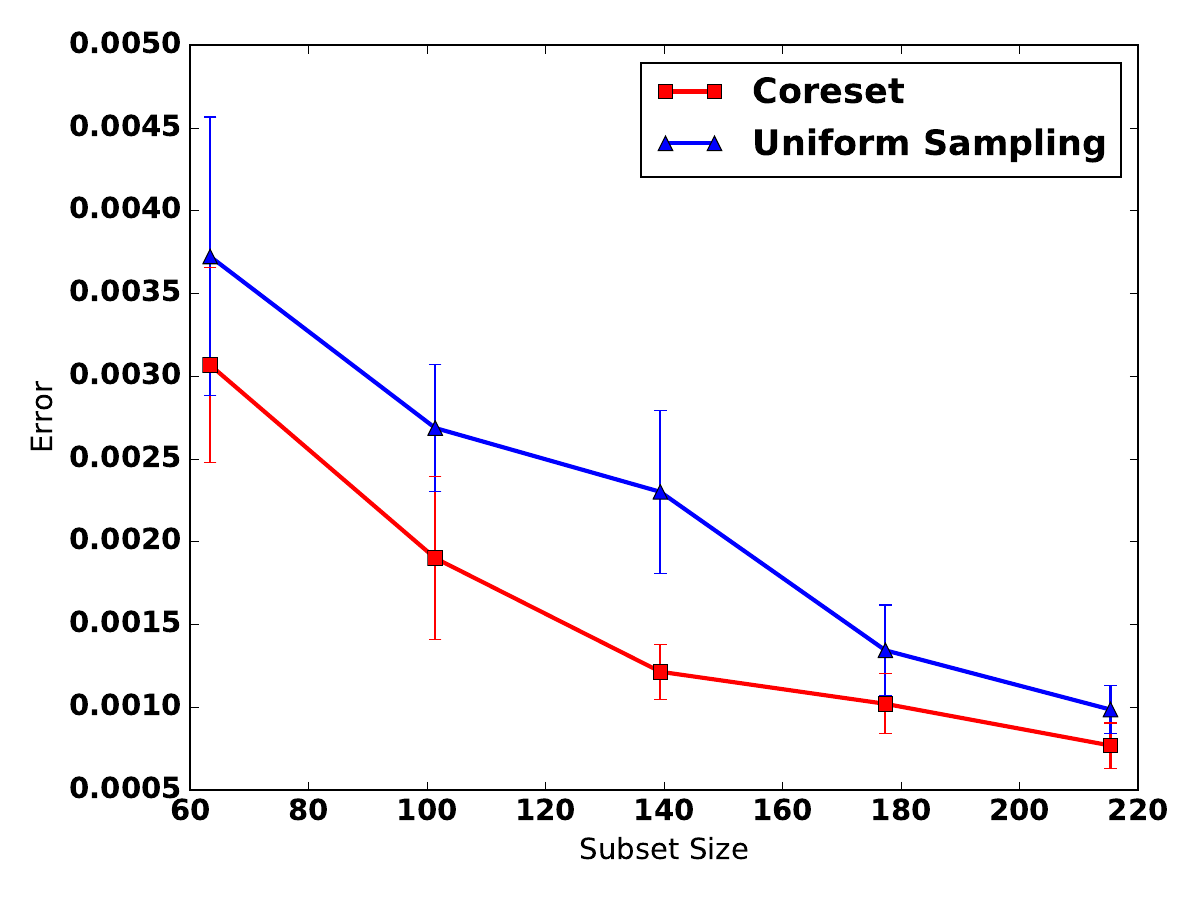}
		\caption{Wine dataset, $k=1000$}
		\label{fig:c}
	\end{subfigure}
	\vfill
	\begin{subfigure}{.4\linewidth}
		\centering \includegraphics[width=\linewidth]{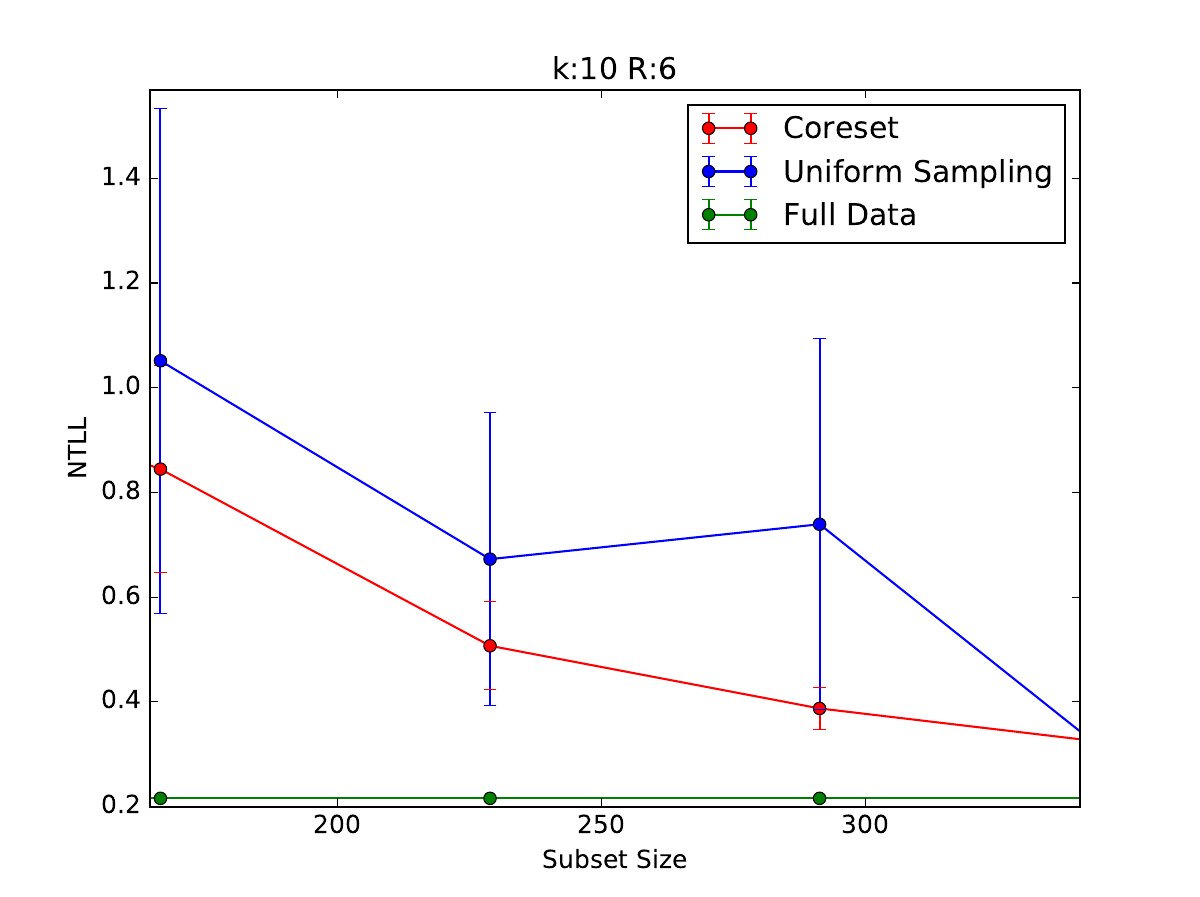}
		\caption{Bank Marketing dataset, $k=10,R=6$}
		\label{fig:d}
	\end{subfigure}
	\begin{subfigure}{.4\linewidth}
		\centering \includegraphics[width=\linewidth]{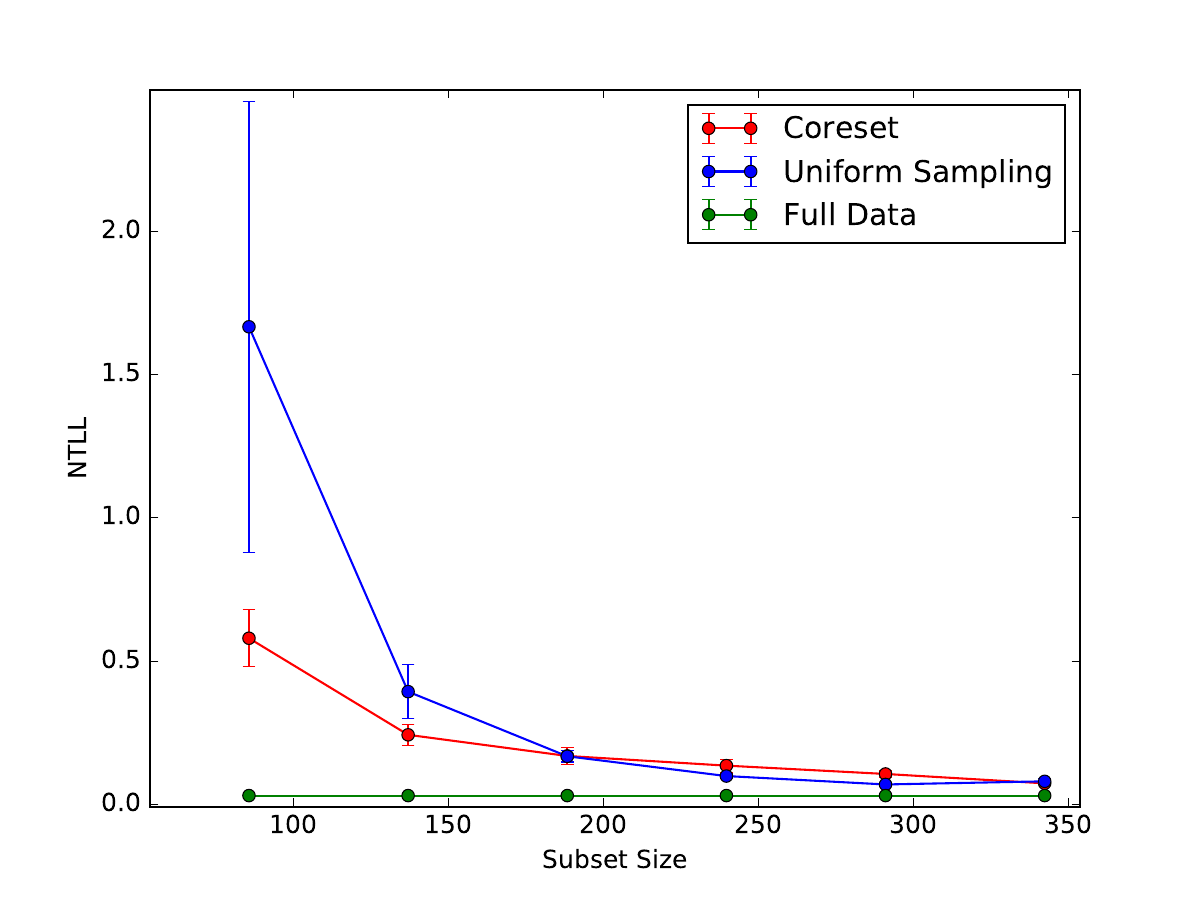}
		\caption{Wine dataset, $k=500,R=4$}
		\label{fig:e}
	\end{subfigure}
	\caption{Experimental results. \textbf{Fig.~\ref{fig:a}-\ref{fig:c}: }The error of maximizing sum of sigmoids using coreset and uniform sampling.
	\textbf{Fig.~\ref{fig:d}-\ref{fig:e}: }Negative test log-likelihood.
	Lower is better in all figures.\label{fig:Comparison-between-uniform}}
\end{figure}


\textbf{Discussion.}
As seen in Fig.~\ref{fig:Comparison-between-uniform}, our coreset outperforms the random sampling scheme as of accuracy, and is more stable (which can be seen in the standard deviation). For small sample sizes, the coreset provides a very small approximation error in practice, unlike the pessimistic theory which suggests bigger error. In this case, the coreset produces errors much smaller than the uniform sampling scheme.
As the sample sizes grow, both the coreset and the random sample simply contain a big portion of the original full data, and hence their output errors decrease and also becomes more similar, as predicted.
Furthermore, the coreset construction takes a neglectable amount of time from the total running time of computing a coreset and running the BFGS algorithm on the coreset. This is since the construction time is near linear. Hence, the computational time is not given in the graphs, as both sampling schemes required roughly the same total running time. 
Moreover, While in theory our results hold only for sufficiently large values of $k$, in practice we tested multiple $k$ values and witnessed a neglectable effect on the results. This is common in coresets paper where the worst-case theoretical bounds are too pessimistic and ignore structure in the data.

\vspace{-0.1cm}
\section{Conclusion}
\vspace{-0.1cm}
We provided a new coreset construction algorithm which computes a coreset for sums of sigmoid functions, which is common in deep learning and NP-hard to minimize, and logistic regression, where a coreset in~\cite{huggins2016coresetsLogistic} were suggested but with no support for regularization term, and no provable worst case bounds on the size of the coreset. Our construction algorithm is easily applicable to other functions as well. 
The coreset is of size near-logarithmic in the input size and can be computed in near-linear time.

Experimental results demonstrate that our coreset outperforms a standard sampling method, both in accuracy and stability. The experiments prove that empirically, our coreset is very effective; A coreset of size less than $1\%$ of the input suffices to produce a small error of $\varepsilon = 0.001$. 
Future work includes generalizing for additional widely common functions, and hopefully relaxing the assumptions required on the handled functions.

\bibliography{sigmoid}

\begin{thebibliography}{10}

\bibitem{anthony2009neural}
Martin Anthony and Peter~L Bartlett.
\newblock {\em Neural network learning: Theoretical foundations}.
\newblock cambridge university press, 2009.

\bibitem{bishop1995training}
Chris~M Bishop.
\newblock Training with noise is equivalent to tikhonov regularization.
\newblock {\em Neural computation}, 7(1):108--116, 1995.

\bibitem{boutsidis2013nearOptimalLeastSquare}
Christos Boutsidis, Petros Drineas, and Malik Magdon-Ismail.
\newblock Near-optimal coresets for least-squares regression.
\newblock {\em IEEE transactions on information theory}, 59(10):6880--6892,
  2013.

\bibitem{braverman2016new}
Vladimir Braverman, Dan Feldman, and Harry Lang.
\newblock New frameworks for offline and streaming coreset constructions.
\newblock {\em arXiv preprint arXiv:1612.00889}, 2016.

\bibitem{opencode}
Code.
\newblock Open source code for all the algorithms presented in this paper,
  2021.
\newblock the authors commit to publish upon acceptance of this paper or
  reviewer request.

\bibitem{cortez2009modeling}
Paulo Cortez, Ant{\'o}nio Cerdeira, Fernando Almeida, Telmo Matos, and Jos{\'e}
  Reis.
\newblock Modeling wine preferences by data mining from physicochemical
  properties.
\newblock {\em Decision Support Systems}, 47(4):547--553, 2009.

\bibitem{curtin2019coresets}
Ryan~R Curtin, Sungjin Im, Ben Moseley, Kirk Pruhs, and Alireza Samadian.
\newblock On coresets for regularized loss minimization.
\newblock {\em arXiv preprint arXiv:1905.10845}, 2019.

\bibitem{dasgupta2009samplingLpReg}
Anirban Dasgupta, Petros Drineas, Boulos Harb, Ravi Kumar, and Michael~W
  Mahoney.
\newblock Sampling algorithms and coresets for $\backslash$ell\_p regression.
\newblock {\em SIAM Journal on Computing}, 38(5):2060--2078, 2009.

\bibitem{elidan2010copula}
Gal Elidan.
\newblock Copula bayesian networks.
\newblock In {\em Advances in neural information processing systems}, pages
  559--567, 2010.

\bibitem{feldman2020core}
Dan Feldman.
\newblock Core-sets: Updated survey.
\newblock {\em Sampling Techniques for Supervised or Unsupervised Tasks}, pages
  23--44, 2020.

\bibitem{feldman2011scalable}
Dan Feldman, Matthew Faulkner, and Andreas Krause.
\newblock Scalable training of mixture models via coresets.
\newblock In {\em Advances in neural information processing systems}, pages
  2142--2150, 2011.

\bibitem{feldman2011unified}
Dan Feldman and Michael Langberg.
\newblock A unified framework for approximating and clustering data.
\newblock In {\em Proceedings of the forty-third annual ACM symposium on Theory
  of computing}, pages 569--578. ACM, 2011.

\bibitem{feldman2007ptas}
Dan Feldman, Morteza Monemizadeh, and Christian Sohler.
\newblock A ptas for k-means clustering based on weak coresets.
\newblock In {\em Proceedings of the twenty-third annual symposium on
  Computational geometry}, pages 11--18. ACM, 2007.

\bibitem{soda13}
Dan Feldman, Melanie Schmidt, and Christian Sohler.
\newblock Turning big data into tiny data: Constant-size coresets for k-means,
  pca and projective clustering.
\newblock In {\em Proceedings of the Twenty-Fourth Annual ACM-SIAM Symposium on
  Discrete Algorithms}, pages 1434--1453. SIAM, 2013.

\bibitem{feldman2013turning}
Dan Feldman, Melanie Schmidt, and Christian Sohler.
\newblock Turning big data into tiny data: Constant-size coresets for k-means,
  pca and projective clustering.
\newblock In {\em Proceedings of the Twenty-Fourth Annual ACM-SIAM Symposium on
  Discrete Algorithms}, pages 1434--1453. Society for Industrial and Applied
  Mathematics, 2013.

\bibitem{feldman2012data}
Dan Feldman and Leonard~J Schulman.
\newblock Data reduction for weighted and outlier-resistant clustering.
\newblock In {\em Proceedings of the twenty-third annual ACM-SIAM symposium on
  Discrete Algorithms}, pages 1343--1354. Society for Industrial and Applied
  Mathematics, 2012.

\bibitem{har2006coresets}
Sariel Har-Peled.
\newblock Coresets for discrete integration and clustering.
\newblock In {\em International Conference on Foundations of Software
  Technology and Theoretical Computer Science}, pages 33--44. Springer, 2006.

\bibitem{har2007maximumSvm}
Sariel Har-Peled, Dan Roth, and Dav Zimak.
\newblock Maximum margin coresets for active and noise tolerant learning.
\newblock In {\em IJCAI}, pages 836--841, 2007.

\bibitem{Hellerstein}
J.~Hellerstein.
\newblock Parallel programming in the age of big data.
\newblock Gigaom Blog, 2008.

\bibitem{huggins2016coresetsLogistic}
Jonathan Huggins, Trevor Campbell, and Tamara Broderick.
\newblock Coresets for scalable bayesian logistic regression.
\newblock In {\em Advances In Neural Information Processing Systems}, pages
  4080--4088, 2016.

\bibitem{kajino2012convex}
Hiroshi Kajino, Yuta Tsuboi, and Hisashi Kashima.
\newblock A convex formulation for learning from crowds.
\newblock {\em Transactions of the Japanese Society for Artificial
  Intelligence}, 27(3):133--142, 2012.

\bibitem{kukavcka2017regularization}
Jan Kuka{\v{c}}ka, Vladimir Golkov, and Daniel Cremers.
\newblock Regularization for deep learning: A taxonomy.
\newblock {\em arXiv preprint arXiv:1710.10686}, 2017.

\bibitem{LS10}
M.~Langberg and L.~J. Schulman.
\newblock Universal $\varepsilon$ approximators for integrals.
\newblock {\em To appear in proceedings of ACM-SIAM Symposium on Discrete
  Algorithms (SODA)}, 2010.

\bibitem{lucic2017training}
Mario Lucic, Matthew Faulkner, Andreas Krause, and Dan Feldman.
\newblock Training gaussian mixture models at scale via coresets.
\newblock {\em The Journal of Machine Learning Research}, 18(1):5885--5909,
  2017.

\bibitem{maalouf2019fast}
Alaa Maalouf, Ibrahim Jubran, and Dan Feldman.
\newblock Fast and accurate least-mean-squares solvers.
\newblock In {\em Advances in Neural Information Processing Systems}, pages
  8305--8316, 2019.

\bibitem{mai2021coresets}
Tung Mai, Anup~B Rao, and Cameron Musco.
\newblock Coresets for classification--simplified and strengthened.
\newblock {\em arXiv preprint arXiv:2106.04254}, 2021.

\bibitem{moro2014data}
S{\'e}rgio Moro, Paulo Cortez, and Paulo Rita.
\newblock A data-driven approach to predict the success of bank telemarketing.
\newblock {\em Decision Support Systems}, 62:22--31, 2014.

\bibitem{munteanu2018coresets}
Alexander Munteanu, Chris Schwiegelshohn, Christian Sohler, and David Woodruff.
\newblock On coresets for logistic regression.
\newblock In {\em Advances in Neural Information Processing Systems}, pages
  6561--6570, 2018.

\bibitem{samadian2020unconditional}
Alireza Samadian, Kirk Pruhs, Benjamin Moseley, Sungjin Im, and Ryan Curtin.
\newblock Unconditional coresets for regularized loss minimization.
\newblock In {\em International Conference on Artificial Intelligence and
  Statistics}, pages 482--492. PMLR, 2020.

\bibitem{scholkopf2002learning}
Bernhard Sch{\"o}lkopf, Alexander~J Smola, Francis Bach, et~al.
\newblock {\em Learning with kernels: support vector machines, regularization,
  optimization, and beyond}.
\newblock MIT press, 2002.

\bibitem{Segaran}
T.~Segaran and J.~Hammerbacher.
\newblock {\em Beautiful Data: The Stories Behind Elegant Data Solutions}.
\newblock O'Reilly Media, 2009.

\bibitem{vsima2002training}
Ji{\v{r}}{\'\i} {\v{S}}{\'\i}ma.
\newblock Training a single sigmoidal neuron is hard.
\newblock {\em Neural computation}, 14(11):2709--2728, 2002.

\bibitem{tsang2005coreVectorMachine}
Ivor~W Tsang, James~T Kwok, and Pak-Ming Cheung.
\newblock Core vector machines: Fast svm training on very large data sets.
\newblock {\em Journal of Machine Learning Research}, 6(Apr):363--392, 2005.

\bibitem{tukan2020coresets}
Morad Tukan, Alaa Maalouf, and Dan Feldman.
\newblock Coresets for near-convex functions.
\newblock {\em Advances in Neural Information Processing Systems}, 33, 2020.

\bibitem{tukan2021coresets}
Murad Tukan, Cenk Baykal, Dan Feldman, and Daniela Rus.
\newblock On coresets for support vector machines.
\newblock {\em Theoretical Computer Science}, 890:171--191, 2021.

\bibitem{Vap71a}
V.~N. Vapnik and A.~Y. Chervonenkis.
\newblock On the uniform convergence of relative frequencies of events to their
  probabilities.
\newblock {\em Theory Prob. Appl.}, 16:264--280, 1971.

\bibitem{varadarajan2012near}
Kasturi Varadarajan and Xin Xiao.
\newblock A near-linear algorithm for projective clustering integer points.
\newblock In {\em Proceedings of the twenty-third annual ACM-SIAM symposium on
  Discrete Algorithms}, pages 1329--1342. SIAM, 2012.

\bibitem{wang2013improving}
Shusen Wang and Zhihua Zhang.
\newblock Improving cur matrix decomposition and the nystr{\"o}m approximation
  via adaptive sampling.
\newblock {\em The Journal of Machine Learning Research}, 14(1):2729--2769,
  2013.

\bibitem{zheng2017coresetsKR}
Yan Zheng and Jeff~M Phillips.
\newblock Coresets for kernel regression.
\newblock In {\em Proceedings of the 23rd ACM SIGKDD International Conference
  on Knowledge Discovery and Data Mining}, pages 645--654, 2017.

\end{thebibliography}
\bibliographystyle{plain}

\clearpage

\appendix


\section{No Coreset for General Monotonic Functions} \label{sec:noCoreset}
In this section, we provide the full proofs behind the impossibility bound claims presented in Section~\ref{sec:lower_bounds}.

The following lemma proves that if the sensitivity of every input point is $1$ in a given query space, then there is no non-trivial coreset for the query space.
\begin{lemma}[Lower bound via Total sensitivity]
	\label{lemma:sen_is_nec}Let $(P,w,X,c)$ be a query space, and $\varepsilon\in\left(0,1\right)$. If every $\b{p}\in P$ has sensitivity $s_{P,w,X,c}\left(\b{p}\right)= 1$,
	then for every $\eps$-coreset $(Q,u)$ we have $Q=P$.
\end{lemma}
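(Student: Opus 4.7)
The plan is to argue by contradiction: assume there is some $\b{p}_0\in P\setminus Q$ and use the hypothesis $s_{P,w,X,c}(\b{p}_0)=1$ to exhibit a query at which the cost of $P$ is almost entirely concentrated on $\b{p}_0$; since $Q$ misses $\b{p}_0$ the weighted cost $C(Q,u,\b{x})$ will fall far short of $(1-\eps)C(P,w,\b{x})$, contradicting the $\eps$-coreset property.

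First I would unfold the definition of sensitivity. Because $s_{P,w,X,c}(\b{p}_0)=1$, for every $\eta\in(0,1)$ there exists $\b{x}_\eta\in X$ with $C(P,w,\b{x}_\eta)>0$ and
\[
w(\b{p}_0)\,c(\b{p}_0,\b{x}_\eta)\ \ge\ (1-\eta)\,C(P,w,\b{x}_\eta).
\]
Subtracting this from the total gives $\sum_{\b{p}\in P\setminus\{\b{p}_0\}} w(\b{p})c(\b{p},\b{x}_\eta)\le \eta\,C(P,w,\b{x}_\eta)$, so in particular $w(\b{q})c(\b{q},\b{x}_\eta)\le \eta\,C(P,w,\b{x}_\eta)$ for every $\b{q}\in Q\subseteq P\setminus\{\b{p}_0\}$.

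Next I would bound $C(Q,u,\b{x}_\eta)$ from above using these per-point bounds. Since $w>0$, each $\b{q}\in Q$ satisfies $c(\b{q},\b{x}_\eta)\le \eta\,C(P,w,\b{x}_\eta)/w(\b{q})$, so summing with weights $u$ yields
\[
C(Q,u,\b{x}_\eta)\ \le\ \eta\,C(P,w,\b{x}_\eta)\sum_{\b{q}\in Q}\frac{u(\b{q})}{w(\b{q})}.
\]
On the other hand the $\eps$-coreset property demands $C(Q,u,\b{x}_\eta)\ge (1-\eps)\,C(P,w,\b{x}_\eta)$, which combined with $C(P,w,\b{x}_\eta)>0$ gives
\[
\sum_{\b{q}\in Q}\frac{u(\b{q})}{w(\b{q})}\ \ge\ \frac{1-\eps}{\eta}.
\]
The left-hand side is a fixed finite constant determined by $(Q,u)$, while $\eta$ can be chosen arbitrarily small, a contradiction. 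Hence no such $\b{p}_0$ exists and $Q=P$.

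The only subtle point, and the place I would be careful, is the order of quantifiers: we must realize the supremum in $s_{P,w,X,c}(\b{p}_0)=1$ as a \emph{sequence} of queries $\b{x}_\eta$ (the sup need not be attained), and we must restrict to queries with $C(P,w,\b{x}_\eta)>0$ so that the coreset inequality $C(Q,u,\b{x}_\eta)\ge(1-\eps)C(P,w,\b{x}_\eta)$ is non-vacuous — both are built into Definition~\ref{def:sensitivity} and Definition~\ref{coresetdef}, so no further assumption is needed.
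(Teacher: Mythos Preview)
Your proof is correct and follows essentially the same approach as the paper: pick a missing point $\b{p}_0\in P\setminus Q$, use sensitivity $=1$ to find a query concentrating nearly all the cost on $\b{p}_0$, and bound $C(Q,u,\cdot)$ through the ratios $u(\b{q})/w(\b{q})$. The only minor difference is that the paper selects one explicit query (chosen via $u_{\max}=\max_{\b{q}\in Q}u(\b{q})$ and $w_{\min}=\min_{\b{p}\in P}w(\b{p})$) to force $C(Q,u,\b{x}_{\b{p}})<(1-\eps)C(P,w,\b{x}_{\b{p}})$ in one shot, whereas you let $\eta\to 0$ and derive an unbounded lower bound on the fixed constant $\sum_{\b{q}\in Q}u(\b{q})/w(\b{q})$; your limiting version is arguably cleaner and sidesteps the somewhat opaque algebra in the paper's inequalities.
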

\ifproofs
\begin{proof}
	Let $(Q,u)$ be a weighted set, where $Q\subset P$.  It suffices to prove that $(Q,u)$ is not an $\eps$-coreset for $P$.
	Denote
\[
	u_{\max}  \in\arg\max_{\b{p}\in Q}u\left(\b{p}\right),
	\text{ and }
	w_{\min}\in\arg\min_{\b{p}\in P}w\left(\b{p}\right).
	\]
	Let $p\in P\setminus Q$. By the assumption $s_{P,w,X,c}\left(\b{p}\right)\geq 1$, there is $\b{x}_{\b{p}}\in X$ such
	that	
	\[ \frac{w\left(\b{p}\right)c\left(\b{p},\b{x}_{\b{p}}\right)}{C\left(P,w,\b{x}_{\b{p}}\right)}
= 1>\frac{u_{\max}}{u_{\max}}-\frac{w_{\min}\left(1-\varepsilon\right)}{u_{\max}}.
	\]
	Multiplication by $C(P,w,\b{x}_p)$ yields
	\begin{align}
	\begin{split}
	&w\left(\b{p}\right)c\left(\b{p},\b{x}_{\b{p}}\right)  > \\  & \frac{u_{\max}-w_{\min}\left(1-\varepsilon\right)}{u_{\max}}\cdot C\left(P,w,\b{x}_{\b{p}}\right).\label{eq:sens_bound}
	\end{split}
	\end{align}

We have that
\begin{align}
	\nonumber&C\left(Q,u,\b{x_{p}}\right)  = \sum_{\b{q}\in Q}u\left(\b{q}\right)c\left(\b{q},\b{x}_{\b{p}}\right)  \\
	\nonumber&=\sum_{\b{q}\in Q} \frac{u\left(\b{q}\right)}{w\left(\b{q}\right)}w\left(\b{q}\right)c\left(\b{p},\b{x}_{\b{p}}\right)
\le	\frac{u_{\max}}{w_{\min}}\sum_{\b{q}\in Q}w\left(\b{q}\right)c\left(\b{q},\b{x}_{\b{p}}\right)  \\
&	\label{byassum}\le\frac{u_{\max}}{w_{\min}}\sum_{\b{p'}\in P\setminus\left\{ \b{p}\right\} }w\left(\b{p}\right)c\left(\b{p'},\b{x_{p}}\right)  \\\nonumber&=\frac{u_{\max}}{w_{\min}}\left(C\left(P,w,\b{x}_{\b{p}}\right)-w\left(\b{p}\right)c\left(\b{p},\b{x_{p}}\right)\right) \\\label{fif}&<\frac{u_{\max}}{w_{\min}}C\left(P,w,\b{x}_{\b{p}}\right)\left(
1-\frac{u_{\max}-w_{\min}\left(1-\varepsilon\right)}{u_{\max}}\right)
\\\nonumber&=\left(1-\varepsilon\right)C\left(P,w,\b{x}_{\b{p}}\right),
\end{align}
	where~\eqref{byassum} is by the assumption $\b{p}\in P\setminus Q$, and~\eqref{fif} is by~\eqref{eq:sens_bound}.
Hence $Q$ cannot be used to approximate $C(P,w,\b{x_p})$ and thus is not an $\eps$-coreset for $P$.
\end{proof}
\fi

To prove there are query spaces $(P,w,X,c)$ which admit no non-trivial coreset, we are left to formally prove there exists a set of points for which the sensitivity of every point is $1$. Together with the lemma above, this will complete the proof. 

Similarly to the idea behind the counter example in Section~\ref{sec:lower_bounds}, the idea behind finding a set for which every point has sensitivity $1$ is to find a set of points in which every point is linearly separable from the rest of the set.
Such a set was shown to exist in \cite{huggins2016coresetsLogistic}.
\begin{lemma}[\cite{huggins2016coresetsLogistic}]
	\label{lemma:existence of seperation}
	There is a finite set of points $P\subseteq\REAL^d $
	such that for every $\b{p}\in P$ and $R>0$ there
	is $\b{y_{p}}\in\REAL^d$ of length $\norm{\b{y_p}}\leq R$ such that
$\b{y_{p}}\cdot\b{p}=-R$, and for every $\b{q}\in P\setminus \br{\b{p}}$ we have
$\b{y_{p}}\cdot\b{q}\ge R$.
\end{lemma}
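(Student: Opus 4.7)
The lemma only asserts existence of some finite set $P$, without bounding its size or fixing $d$, so my plan is to give an explicit construction in which $d$ equals the desired cardinality of $P$. For any $n\ge 1$, I will take
\[
P=\{\sqrt{n}\,\b{e}_1,\ldots,\sqrt{n}\,\b{e}_n\}\subseteq\REAL^n,
\]
where $\b{e}_i$ is the $i$-th standard basis vector. The intuition is that placing the points at orthogonal, scaled directions makes each point linearly separable from the rest by a hyperplane with a tightly controllable norm. The scale factor $\sqrt{n}$ will turn out to be exactly what is needed so that a separator of Euclidean length $R$ has just enough budget to give value $-R$ on its own point and $+R$ on every other.

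For each $\b{p}_i=\sqrt{n}\,\b{e}_i$ and each $R>0$, I will take the separator
\[
\b{y}_i=\frac{R}{\sqrt{n}}\sum_{j\neq i}\b{e}_j-\frac{R}{\sqrt{n}}\b{e}_i,
\]
i.e., $\b{y}_i$ has coordinate $-R/\sqrt{n}$ in slot $i$ and $+R/\sqrt{n}$ everywhere else. Verification of the three required properties then reduces to three one-line inner-product computations: $\b{y}_i\cdot\b{p}_i=\sqrt{n}\cdot(-R/\sqrt{n})=-R$; $\b{y}_i\cdot\b{p}_j=\sqrt{n}\cdot(R/\sqrt{n})=R$ for every $j\neq i$; and $\norm{\b{y}_i}^2=n\cdot(R/\sqrt{n})^2=R^2$, so $\norm{\b{y}_i}=R$.

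The only subtle point, and the reason the construction is not entirely trivial, is the simultaneous satisfaction of the norm bound $\norm{\b{y}_i}\le R$, the equality $\b{y}_i\cdot\b{p}_i=-R$, and the separation $\b{y}_i\cdot\b{p}_j\ge R$ for $j\neq i$. If one na\"{i}vely took $P=\{\b{e}_1,\ldots,\b{e}_n\}$ unscaled, the coordinate conditions on $\b{y}_i$ would force $\norm{\b{y}_i}\ge\sqrt{n}\,R$, violating the budget. Scaling the points by exactly $\sqrt{n}$ makes the Cauchy--Schwarz inequality tight and just barely rescues the construction; once this scaling choice is identified, the rest is routine verification.
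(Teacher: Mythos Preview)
Your construction is correct: with $P=\{\sqrt{n}\,\b{e}_1,\ldots,\sqrt{n}\,\b{e}_n\}$ and $\b{y}_i=(R/\sqrt{n})\bigl(-\b{e}_i+\sum_{j\neq i}\b{e}_j\bigr)$, all three required identities check out exactly as you wrote, and your remark about why the $\sqrt{n}$ scaling is forced is accurate.

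Note, however, that the paper does not supply its own proof of this lemma; it is quoted directly from~\cite{huggins2016coresetsLogistic} and used as a black box in the proof of Theorem~\ref{theorem:No coreset}. So there is no argument here to compare your approach against. Your explicit orthogonal-basis construction is the natural one and is consistent with how the lemma is applied downstream: Theorem~\ref{theorem:No coreset} needs, for every $n\ge 1$, a set $P$ of size $n$ (in some dimension $d$) with this separation property, and your construction delivers exactly that with $d=n$.
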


The following theorem stems from the combination of the above claims. Consider the query space $(P,w,X,c)$, where $P$ is the set of points from the lemma above, and $c(\b{x},\b{p})=f(\b{x}\cdot\b{p})$ for every $\b{x},\b{p}\in\REAL^{d}$, where $f:\REAL\rightarrow\left(0,\infty\right)$ is a non-decreasing monotonic function. 
The theorem proves that, with respect to the query space $(P,w,X,c)$, the sensitivity of every point in $P$ is $1$.
We generalize a result from \cite{huggins2016coresetsLogistic} by considering weighted data and by letting the cost be any function upholding the conditions of Theorem~\ref{theorem:No coreset_appendix}.
\begin{theorem} [Theorem~\ref{theorem:No coreset}]
	\label{theorem:No coreset_appendix}Let $f:\REAL\rightarrow\left(0,\infty\right)$
	be a non-decreasing monotonic function that satisfies $\lim_{x\rightarrow\infty}\frac{f\left(-x\right)}{f\left(x\right)}=0$,
	and let $c\left(\b{x},\b{p}\right)=f\left(\b{x}\cdot\b{p}\right)$
	for every $\b{x},\b{p}\in\REAL^{d}$. Let $\varepsilon\in\left(0,1\right)$, $n\geq 1$ be an integer, and $w:\REAL^d\rightarrow\left(0,\infty\right)$. There is a set $P\subset\REAL^{d}$ of $|P|=n$ points such that if $(Q,u)$ is an $\eps$-coreset of $\left(P,w,\REAL^{d},c\right)$ then $Q=P$.
\end{theorem}
\ifproofs
\begin{proof}
	Let $P\subseteq\REAL^d$ be the set that is defined in Lemma~\ref{lemma:existence of seperation}, and let $\b{p}\in P$, and $R>0$.
By Lemma~\ref{lemma:existence of seperation}, there is $\b{y_p}\in \REAL^d$ such that $\b{y_p}\cdot \b{p}=-R$, and for every $\b{q}\in P\setminus \br{\b{p}}$ we have $\b{-y_{p}}\cdot\b{q} \le-R$.
	By this pair of properties,
	\[
f\left(-\b{y_{p}}\cdot\b{p}\right)=f\left(R\right)
\text{ and }
	f\left(\b{-y_{p}}\cdot\b{q}\right)\le f\left(-R\right),
	\]
where in the last inequality we use the assumption that $f$ is non-decreasing.
	By letting $\b{x_{p}=-}\b{y_{p}}$, we have
	\[
	\frac{w(q)f\left(\b{x_{p}}\cdot\b{q}\right)}{w(p)f\left(\b{x_{p}}\cdot\b{p}\right)}
	=\frac{w(q)f\left(-\b{y_{p}}\cdot\b{q}\right)}{w(p)f\left(\b{-y_{p}}\cdot\b{p}\right)}
\le\frac{w(q)f\left(-R\right)}{w(p)f\left(R\right)}.
	\]
	Therefore, by letting $w_{\max}\in\arg\max_{\b{p}\in P}w\left(\b{p}\right)$,
	\begin{align*}
	&s_{P,w,\REAL^d,c}\left(\b{p}\right)  \geq \frac{w\left(\b{p}\right)
f\left(\b{x_p}u\cdot\b{p}\right)}{\sum_{\b{q}\in P}w\left(\b{q}\right)f\left(\b{x_p}\cdot\b{q}\right)}  \\ & =\frac{w\left(\b{p}\right)f\left(\b{x_{p}}\cdot \b{p}\right)}{w\left(\b{p}\right)f\left(\b{p}\cdot\b{x_{p}}\right)
+\sum_{\b{q}\in P\setminus\br{\b{p}}}w\left(\b{q}\right)f\left(\b{x_{p}}\cdot\b{q}\right)} \\
 &=\frac{1}{1+\sum_{\b{q}\in P\setminus\br{\b{p}}}\frac{w\left(\b{q}\right)f\left(\b{x_{p}}\cdot\b{q}\right)}{w\left(\b{p}\right)f\left(\b{x_{p}}\cdot \b{p}\right)}} 
  \ge \frac{1}{1+\sum_{q\in P\setminus\br{\b{p}}}\frac{w\left(\b{q}\right)f\left(-R\right)}{w\left(\b{p}\right)f\left(R\right)}} \\ &\ge \frac{1}{1+\left(n-1\right)\frac{w_{\max}f\left(-R\right)}{w\left(\b{p}\right)f\left(R\right)}}.
	\end{align*}
	We also have
	\[
\lim_{R\rightarrow\infty}\frac{	w_{\max} f\left(-R\right)}{w\left(\b{p}\right)f\left(R\right)}
=	\frac{w_{\max}}{w(\b{p})}\lim_{R\rightarrow\infty}\frac{f\left(-R\right)}{f\left(R\right)}=0,
\]
where the last derivation holds by the assumption on $f$.
Thus we obtain
\[
s_{P,w,\REAL^{d},c}\left(\b{p}\right)
=\sup_{R>0}\frac{1}{1+\left(n-1\right)\frac{	w_{\max} f\left(-R\right)}{w\left(\b{p}\right)f\left(R\right)}}=1.
\]
Theorem~\ref{theorem:No coreset_appendix} then follows from the last equality and Lemma~\ref{lemma:sen_is_nec}.
\end{proof}
\fi

\section{$\mathcal{L}_\infty$-Coresets} \label{sec:LInfCoresets}

\begin{lemma} [Lemma~\ref{lemma:exsitance of inf bounded}] \label{lemma:exsitance of inf bounded_appendix}
Let $P\subset\REAL^{d}$ be a finite set, $M, k>0$ be constants, $f:\REAL\rightarrow(0,M]$
be non-decreasing function and $g:[0,\infty)\rightarrow[0,\infty)$ be a function. For every $\b{x}\in\REAL^{d}$ and $\b{p}\in P$ define $c_{k}\left(\b{p,x}\right)=f\left(\b{p}\cdot\b{x}\right)+\frac{g\left(\left\Vert \b{x}\right\Vert \right)}{k}$.
Put $\b{p}\in P$. Suppose there is $b_{\b{p}}>0$ such that for every $z>0$
\begin{equation}
f\left(\left\Vert \b{p}\right\Vert z\right)+\frac{g\left(z\right)}{k}\le b_{\b{p}}\left(f\left(-\left\Vert \b{p}\right\Vert z\right)+\frac{g\left(z\right)}{k}\right).\label{eq:assumption 2-1}
\end{equation}
Then $\br{\b{p}}$ is an $\eps-\mathcal{L}_\infty$ coreset with $\eps = \frac{M}{f\left(0\right)}\left(b_{\b{p}}+1\right)-1$, i.e., for every $\b{x}\in\REAL^{d}$
\[
\max_{\b{p'}\in P}c_{k}\left(\b{p'},\b{x}\right) \le \frac{M}{f\left(0\right)}\left(b_{\b{p}}+1\right)c_{k}\left(\b{p},\b{x}\right).
\]
\end{lemma}
\ifproofs
\begin{proof}
	Let $\b{x}\in\REAL^{d}$ and $q\in P$ 
	such that $\b{x}\cdot\b{q}>0$. We have, by the monotonic
	properties of $f$,
	\begin{equation}
	 f\left(0\right) \leq f\left(\b{x}\cdot\b{q}\right).\label{eq:lower_bound_q}
	\end{equation}
	Hence,
	\begin{equation}
	\max_{\b{p}'\in P}f\left(\b{x}\cdot\b{p}'\right)\le M=\frac{M}{f\left(0\right)}f\left(0\right)\le\frac{M}{f\left(0\right)}f\left(\b{x}\cdot\b{q}\right),\label{eq:case1}
	\end{equation}
	where the first inequality is since $f$ is bounded by $M$, and the
	last inequality is by~\eqref{eq:lower_bound_q}. By adding $\frac{g\left(\left\Vert \b{x}\right\Vert \right)}{k}$
	to both sides of~\eqref{eq:case1} and since $1\leq \frac{M}{f\left(0\right)}$
	we obtain,
	\begin{align}
	\begin{split}
&\max_{p'\in P}c_k(p',x)=	\max_{\b{p}'\in P}f\left(\b{x}\cdot\b{p}'\right)+\frac{g\left(\left\Vert \b{x}\right\Vert \right)}{k} \\
&\le
	\frac{M}{f\left(0\right)}f\left(\b{x}\cdot\b{q}\right)
+\frac{g\left(\left\Vert \b{x}\right\Vert \right)}{k}\\
&\leq	\frac{M}{f\left(0\right)}\left(f\left(\b{x}\cdot\b{q}\right)+\frac{g\left(\left\Vert \b{x}\right\Vert \right)}{k}\right).\label{eq:upper_bound_P'-1}
	\end{split}
	\end{align}
	The rest of the proof follows by case analysis on the sign of $\b{x}\cdot\b{p}$,
	i.e. $\left(i\right)\,\b{x}\cdot\b{p}\ge0$ and
	$\left(ii\right)\,\b{x}\cdot\b{p}<0$.
	
	\textbf{Case (\romannum{1})}: $\b{x}\cdot\b{p}\ge0$. 	Substituting $q=p$ in ~\eqref{eq:upper_bound_P'-1} yields
	\begin{align}
	\begin{split}
&\max_{p'\in P}c_k(p',x)\le 	 \frac{M}{f\left(0\right)}\left(f\left(\b{x}\cdot\b{p}\right)+\frac{g\left(\left\Vert \b{x}\right\Vert \right)}{k}\right)\label{eq:case1-1}\\
&=	 \frac{M}{f\left(0\right)}c_k(p,x)\leq \frac{M}{f\left(0\right)}(b_p+1)c_k(p,x),
 	\end{split}
	\end{align}
where the last inequality follows by the assumption $b_p>0$.
	\textbf{Case (\romannum{2})}: $\b{x}\cdot\b{p}<0$. In this case $\b{x}\cdot\left(-\b{p}\right)>0$. Substituting $q=-p$ in~\eqref{eq:upper_bound_P'-1} yields
	\begin{align}
&\max_{p'\in P}c_k(p',x) \le
	\frac{M}{f\left(0\right)}\left(f\left(\b{x}\cdot\left(-\b{p}\right)\right)+\frac{g\left(\left\Vert \b{x}\right\Vert \right)}{k}\right)\label{eq:6}   \\
& \le	\frac{M}{f\left(0\right)}\left(f\left(\left\Vert \b{x}\right\Vert \left\Vert \b{p}\right\Vert \right)+\frac{g\left(\left\Vert \b{x}\right\Vert \right)}{k}\right)\label{eq:7} \\
	&\leq \frac{M}{f\left(0\right)}b_{\b{p}}\left(f\left(-\left\Vert \b{x}\right\Vert \left\Vert \b{p}\right\Vert \right)+\frac{g\left(\left\Vert \b{x}\right\Vert \right)}{k}\right)\label{eq:8} \\
&\le\frac{M}{f\left(0\right)}b_{\b{p}}\left(f\left(\b{x}\cdot\b{p}\right)+\frac{g\left(\left\Vert \b{x}\right\Vert \right)}{k}\right),\label{eq:9}\\
&=\frac{M}{f(0)}b_pc_k(p,x)\leq \frac{M}{f(0)}(b_p+1)c_k(p,x),
	\end{align}
	where~\eqref{eq:7} and~\eqref{eq:9}
	are by the Cauchy-Schwartz inequality and the monotonicity of $f$, and~\eqref{eq:8} follows by substituting $z=\norm{x}$ in the main assumption of the lemma.
\end{proof}
\fi

\subsection{From $\varepsilon-\mathcal{L}_\infty$ coresets to $\varepsilon$-coresets}

In what follows is the full proof for Lemma~\ref{lem:reduction}. We prove that the algorithm described in Section~\ref{sec:fromLInfToCoresets}, which constructs a series of $\mathcal{L}_\infty$ coresets, can indeed give an upper bound on the sensitivity of every input element as well as a near logarithmic upper bound on the total sensitivity.
\begin{lemma} [Lemma~\ref{lem:reduction}] \label{lem:reduction_appendix}
Let $c:\REAL^d \times \REAL^d \to (0, \infty)$. Suppose that for some $\eps \in(0,1)$ there is a non-decreasing function $\Delta_{\eps}(n)$ so that for any $P' \subseteq \REAL^d$ of size $n$ there is an $\eps-\mathcal{L}_\infty$ coreset of size at most $\Delta_{\eps}(n)$ for $(P', \mathbf{1}, \REAL^d, c)$. Then, for any $P \subseteq \REAL^d$ of size $n$ we can compute an upper bound $s(p)$ on the sensitivity $s_{P,\mathbf{1},\REAL^d,c}(p)$ for each $p\in P$, so that $\sum_{p\in P}s_{P,\mathbf{1},\REAL^d,c}(p) \leq (1+\eps)\Delta_{\eps}(n)\ln{n}$. 
\end{lemma}
\ifproofs
\begin{proof}
The proof is constructive. We build a sequence of subsets $P_1 \supseteq P_2 \supseteq \cdots \supseteq P_m$, where $P = P_1$, $m\leq n$, and $|P_m| \leq \Delta_{\eps}(n)$. We construct the sequence as follows. If $|P_i| \leq \Delta_{\eps}(n)$ the sequence stops. Otherwise, we compute an $\mathcal{L}_\infty$ $\eps$-coreset $C_i$ for $(P_i, \mathbf{1}, \REAL^d, c)$ of size $|C_i| \leq \Delta_{\eps}(n)$. We now define $P_{i+1} = P_i \setminus C_i$.

Put $i \in [m]$. We now upper bound the sensitivity $s_{P,\mathbf{1},\REAL^d,c}(q)$ for every $q\in C_i$ by $\frac{1+\eps}{i}$ and upper bound the total sensitivity $\sum_{p\in P}s(p) \leq (1+\eps)\Delta_{\eps}(n)\ln{n}$.

Put $x\in \REAL$ and $q' \in C_i$, and consider $1 \leq j \leq i$. Let $q_j \in C_j$ be the points in the $\eps-\mathcal{L}_\infty$ coreset $C_j$ such that $c(q_j, x) = \max_{q \in C_j} c(q,x)$. We now have that
\begin{equation} \label{eq:qj}
c(q',x) \leq \max_{p \in P_j} c(p,x) \leq (1+\eps) \max_{q \in C_j} c(q,x) = (1+\eps) c(q_j, x)
\end{equation}
where the first derivations holds since, by construction, $q' \in P_j$. The second derivation is by the definition of an $\eps-\mathcal{L}_\infty$ coreset.
We thus obtain that
\begin{equation} \label{eq:sens_qj}
\frac{c(q',x)}{\sum_{p \in P} c(p,x)} \leq \frac{c(q',x)}{\sum_{\ell=1}^i c(q_\ell,x)} \leq \frac{1+\eps}{i},
\end{equation}
where the second derivation holds since $\br{q_j \mid 1\leq j \leq i} \subseteq P$, and the last derivation is by~\eqref{eq:qj}.
Since~\eqref{eq:sens_qj} holds for any $x \in \REAL^d$, we obtain that the sensitivity of $q'$ is upper bounded by
\[
s_{P,\mathbf{1},\REAL^d,c}(q') = \sup_{\b{x}\in \REAL^d}\frac{c(q',x)}{\sum_{p \in P} c(p,x)} \leq \frac{1+\eps}{i}.
\]
Hence, for every $q' \in C_i$ we have that $s_{P,\mathbf{1},\REAL^d,c}(q') \leq \frac{1+\eps}{i}$. Now, the total sensitivity can be bounded by
\[
\sum_{p\in P} s_{P,\mathbf{1},\REAL^d,c}(p) = \sum_{i=1}^m \frac{(1+\eps)|C_i|}{i} \leq \Delta_{\eps}(n) \sum_{i=1}^m \frac{1+\eps}{i} \leq (1+\eps)\Delta_{\eps}(n) \ln{n}.
\]
\end{proof}\fi

\subsection{Coreset sufficient condition} \label{sec:coresetSuffCond}

In what follows we give the full proof for Theorem~\ref{theorem:coreset}. The proof is constructive in the sense that it gives an upper bound for the sensitivity of every input point and upper bounds the total sensitivity by a term which is near logarithmic in the input size.
\begin{theorem} [Theorem~\ref{theorem:coreset}] \label{theorem:coreset_appendix}
Let $M, k>0$ be constants, $P \subseteq \REAL^d$ be a set of points, $f:\REAL\to (0,M]$ be a monotonic non-decreasing function, and $c_{k}\left(\b{p',x}\right)=f\left(\b{p'}\cdot\b{x}\right)+\frac{g(\norm{x})}{k}$ for every $\b{x}\in\REAL^{d}$ and $\b{p'}\in P$. 
Suppose there is $b:P \to (0,\infty)$ such that for every $\b{p}\in P$ and every $z>0$
\begin{equation}
f\left(\norm{\b{p}} z\right)+\frac{g\left(z\right)}{k}\le b({\b{p}})\left(f\left(-\norm{\b{p}} z\right)+\frac{g\left(z\right)}{k}\right).
\end{equation}
Let $b_{\max} \in \argmax_{\b{p} \in P} b({\b{p})}$, $t=(1+\frac{M}{f(0)} b_{\max})\ln n$, and $\eps,\delta\in(0,1)$. Lastly, let $d_{VC}$ be the VC-dimension of $(P,\mathbf{1},\REAL^d,c_k)$.
Then, there is a weighted set $(Q,u)$, where $Q\subseteq P$ and
\[
|Q| \in O\left(\frac{t}{\varepsilon^{2}}\left(d_{VC}\log t+\log\frac{1}{\delta}\right)\right),
\]
such that with probability at least $1-\delta$, $(Q,u)$ is an $\eps$-coreset for the query space $(P,\mathbf{1},\REAL^d,c_k)$.
\end{theorem}
\ifproofs
\begin{proof}
For $\b{p} \in P$ we have that	
\begin{align}
\max_{\b{p'}\in \label{eq:10aa} P}c_{k}\left(\b{p'},\b{x}\right)\le\frac{M}{f\left(0\right)}\left(b_{\b{p}}+1\right)c_{k}(\b{p},\b{x}) \le  \\ 
\frac{M}{f\left(0\right)}\left(b_{\max}+1\right)c_{k}(\b{p},\b{x}) \label{eq:10ba}.
\end{align}
Where~\eqref{eq:10aa} is by substituting in Lemma~\ref{lemma:exsitance of inf bounded} and~\eqref{eq:10ba} holds since for every $\b{p}\in P$, $b({\b{p}}) \le b_{\max}$. 
Let $\eps(\b{p}) := \left[\left(\frac{M}{f\left(0\right)}\left(b(\b{p})+1\right)\right) - 1\right]$ for every $\br{p}\in P$ and let $\eps = \left(\frac{M}{f\left(0\right)}\left(b_{\max}+1\right)\right) - 1$. 
Thus, for every $\b{p} \in P$, we have that $\{\b{p}\}$ is an $\eps(\b{p})$-$\mathcal{L}_{\infty}$ coreset which is a $\eps$-$\mathcal{L}_{\infty}$ coreset.

In Lemma~\ref{lem:reduction}, a sequence of distinct $\mathcal{L}_\infty$ coresets that cover the entire set P $C_1 \cup\cdots\cup C_m \subseteq P$ are constructed. 
For every $\b{p} \in P$, let $i(\b{p})$ be the index of the $\mathcal{L}_\infty$ coreset $C_{i(\b{p})}$ such that $\b{p} \in C_{i(\b{p})}$.
Plugging $\eps$ and $\Delta_\eps(n) = 1$ in Lemma~\ref{lem:reduction} and its proof yields that we can upper bound the sensitivity of every $\b{p} \in P$ by 
\[
s_{P,\mathbf{1},\REAL^d,c_k}(\b{p}) \leq \frac{1+\eps(\b{p})}{i(\b{p})} \leq \frac{1+\eps}{i(\b{p})} = \frac{\left(\frac{M}{f\left(0\right)}\left(b_{\max}+1\right)\right)}{i(\b{p})},
\]
where $i(\b{p})$ is the index of $\b{p}$ when sorting the points in $P$ by their norm.
Furthermore, the total sensitivity is bounded by
\[
t(P,\b{1},\REAL^d,c_k) = \sum_{\b{p} \in P} \frac{\left(\frac{M}{f\left(0\right)}\left(b_{\max}+1\right)\right)}{i(\b{p})} \in O\left((1+\frac{M}{f(0)} b_{\max})\ln n\right).
\]
Observe that sensitivity of $\b{p} \in P$ depends on $\eps(\b{p}) = \left[\left(\frac{M}{f\left(0\right)}\left(b(\b{p})+1\right)\right) - 1\right]$ divided by the index $i(\b{p})$. Hence, empirically, to obtain smaller total sensitivity, we would prefer to reorder $P$ such that points $\b{p}$ with larger value of $b(\b{p})$ are divided by larger values $i(\b{p})$. Therefore, we can simply sort the points of $P$ according to the values of the function $b$, from smallest to largest. Thus, points $\b{p}$ with larger value of $b(\b{p})$ are given larger index $i(\b{p})$.

Theorem~\ref{theorem:coreset} now immediately follows from Theorem~\ref{theorem:sens_is_coreset}.
\end{proof}
\fi

\section{Main Proofs}
In this section, we first prove a series of technical claims. We then utilize those claims to prove the main results of this work.

\subsection{Technical Claims}
\begin{lemma} \label{lemma:sigmoid_intersection}
	Let $f:\REAL \to (0, \infty)$ be a monotonic increasing function such that $f(0) > 0$. Let $c,k>0$. There is exactly one number $x_{kc}>0$
	that simultaneously satisfies the following claims.
	
	\begin{enumerate}[label=\textbf{(\roman*)}]
		\item \label{sig.1} $f\left(-\sqrt{ck}x_{kc}\right)=x_{kc}^{2}.$
		\item \label{sig.2} For every $x>0$, if $f\left(-\sqrt{ck}x\right)>x^{2}$ then $x<x_{kc}$. 
		\item \label{sig.3} For every $x>0$, if $f\left(-\sqrt{ck}x\right)<x^{2}$ then $x>x_{kc}$.
		\item \label{sig.4} There is $k_{0}>0$ such that for every $k' \ge k_{0}$ 
		\[
		\frac{1}{x_{kc}}\le \sqrt{ck'}.
		\]
	\end{enumerate}
\end{lemma}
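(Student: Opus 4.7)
The plan is to study the auxiliary function
\[
h(x) \;:=\; f(-c\sqrt{k}\,x) - x^2, \qquad x\in(0,\infty).
\]
Because $f$ is monotonically increasing and $x\mapsto -c\sqrt{k}\,x$ is strictly decreasing, the composition $x\mapsto f(-c\sqrt{k}\,x)$ is nonincreasing in $x$; subtracting the strictly decreasing function $x^2$ makes $h$ \emph{strictly} decreasing on $(0,\infty)$. The boundary behavior is immediate: $h(x)\to f(0)>0$ as $x\to 0^+$, while $h(x)\to -\infty$ as $x\to\infty$ since $x^2\to\infty$ and $f(-c\sqrt{k}\,x)$ remains nonnegative. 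Assuming continuity of $f$ — which holds for all activation functions the lemma is applied to in this paper, and which can otherwise be side-stepped by defining $x_{kc}$ as the unique sign-change location of the monotone $h$ — the intermediate value theorem produces exactly one positive root $x_{kc}$ of $h$. This is item (i), and items (ii) and (iii) follow directly from the strict monotonicity of $h$: one has $h(x)>0 \Leftrightarrow x<x_{kc}$ and $h(x)<0 \Leftrightarrow x>x_{kc}$, which is precisely what they assert.

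For item (iv) the plan is to evaluate $h$ at the concrete candidate $x_\star := 1/(c\sqrt{k})$. At this point $-c\sqrt{k}\,x_\star = -1$, so
\[
h(x_\star) \;=\; f(-1) \;-\; \frac{1}{c^2 k}.
\]
Since $f(-1)>0$ is a fixed constant independent of $k$, setting $k_0 := 1/\bigl(c^2 f(-1)\bigr)$ (or anything larger) ensures $h(x_\star)\ge 0$ for every $k\ge k_0$. Item (ii) — or, equivalently, the strict monotonicity of $h$ established above — then forces $x_\star \le x_{kc}$, i.e.\ $1/x_{kc}\le c\sqrt{k}$, exactly as required.

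The argument is essentially mechanical once the auxiliary function $h$ is introduced; the only potential subtlety is continuity of $f$, which is needed to obtain the \emph{equality} in item (i) via the IVT. For the sigmoid and the functions arising in the logistic-regression section, $f$ is smooth and strictly positive on all of $\REAL$, so both $f(0)>0$ and $f(-1)>0$ hold automatically and no extra hypotheses beyond those in the statement are needed.
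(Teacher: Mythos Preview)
Your proof is correct and follows essentially the same route as the paper: define the auxiliary $h(x)=f(-c\sqrt{k}\,x)-x^2$, use its strict monotonicity together with the IVT for (i)--(iii), and for (iv) test the point $x_\star=1/(c\sqrt{k})$ to obtain $h(x_\star)=f(-1)-1/(c^2k)\ge 0$ for large $k$. Two small remarks: you wrote ``the strictly decreasing function $x^2$'' where you meant strictly \emph{increasing} (the conclusion is unaffected), and your direct monotonicity argument is in fact cleaner than the paper's, which differentiates $h$ and so tacitly assumes $f$ is differentiable rather than merely monotone.
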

\begin{proof}
	Let $g(x)=x^2$. Define 
	\begin{equation}
	h_{kc}(x)=f(-\sqrt{ck}x)-g(x).\label{eq:24}
	\end{equation}
	
	\textbf{(\romannum{1})}: It holds that
	\begin{equation}
	h_{kc}(0)=f(0)\label{eq:25-1}
	\end{equation}
	and 
	\begin{equation}
	h_{kc}\left(\sqrt{f(0) + 1}\right)<0,\label{eq:25}
	\end{equation}
	where~\eqref{eq:25} holds since $f\left(-\sqrt{ck}x\right) \le f(0)$ for every $x>0$, and $g\left(\sqrt{f(0) + 1}\right)=f(0) + 1$. From~\eqref{eq:25-1}
	and~\eqref{eq:25} we have that $0\in\left[h_{kc}\left(\sqrt{f\left(0\right) + 1}\right),h_{kc}(0)\right].$
	Using the Intermediate Value Theorem (Theorem~\ref{theorem:Intermediate-Value-Theorem})
	we have that there is $x_{1}\in\left(0,\sqrt{f(0) + 1}\right)$ such that 
	\begin{equation}
	h_{kc}(x_{1})=0.\label{eq:35}
	\end{equation}
	We prove that $x_{1}$ is unique. By contradiction. Assume that there
	is $x_{2}\neq x_{1}$ such that 
	\begin{equation}
	h_{kc}(x_1)=h_{kc}(x_2)=0.\label{eq:27-3}
	\end{equation}
	Wlog assume that $x_{1}<x_{2}.$ By The Mean Value Theorem (Theorem~\ref{theorem:Mean-Value-Theorem}),
	there is $r\in\left(x_{1},x_{2}\right)$ such that 
	\begin{align}
	h'_{kc}(r) & =\frac{h_{kc}(x_{2})-h_{kc}(x_{1})}{x_{2}-x_{1}}\\
	& =0,\label{eq:29-1}
	\end{align}
	where~\eqref{eq:29-1} is by~\eqref{eq:27-3}.
	The derivative of $h_{kc}$ is
	\begin{align}
	h'_{kc}(x)= & \left(f\left(-\sqrt{ck}x\right)-g(x)\right)'\label{eq:28}\\
	= & -\sqrt{ck}f'\left(-\sqrt{ck}x\right)-g'(x) < 0,\label{eq:27-1}
	\end{align}
	where~\eqref{eq:28} is by~\eqref{eq:24} and~\eqref{eq:27-1} is since $f$ is monotonic increasing and thus $f'(x) > 0$ for every $x\in\REAL$ and $x,k,c>0$.~\eqref{eq:27-1} is a contradiction to~\eqref{eq:29-1}. Thus the Assumption~\eqref{eq:27-3} is false and $x_{1}$ is unique. 
	
	By~\eqref{eq:24} and~\eqref{eq:35}
	\begin{equation}
	f\left(-\sqrt{ck}x_{1}\right)=g(x_{1}).\label{eq:36-2}
	\end{equation}
	By letting $x_{kc}=x_1$ and recalling that $g(x)=x^2$
	we obtain
	\[
	f\left(-\sqrt{ck}x_{kc}\right)=x_{kc}^2.
	\]
	
	\textbf{(\romannum{2})}: Let $x>0$ such that $f\left(-\sqrt{ck}x\right)>x^{2}$.
	Plugging this and the definition $g(x)=x^{2}$ in~\eqref{eq:24}
	yields
	\begin{equation}
	h_{kc}(x)>0.\label{eq:37-2}
	\end{equation}
	We already proved that $h'_{kc}(x)<0$ always. By the Inverse
	of Strictly Monotone Function Theorem (Theorem~\ref{theorem:Inverse-of-Strictly})
	we have that the inverse $h_{kc}^{-1}$ of $h_{kc}$ is a strictly
	monotone decreasing function. Applying $h_{kc}^{-1}$ on both sides
	of~\eqref{eq:37-2} gives
	\[
	x<x_{kc}.
	\]
	
	\textbf{(\romannum{3})}: Let $x>0$ such that $f\left(-\sqrt{ck}x\right)<x^{2}$.
	By this and by the definition of $g$ and~\eqref{eq:24} we have
	\begin{equation}
	h_{kc}(x)<0.\label{eq:37-2-1}
	\end{equation}
	We already proved that $h'_{kc}(x)<0$ always. By the Inverse of Strictly Monotone Function Theorem (Theorem~\ref{theorem:Inverse-of-Strictly})
	we have that $h_{kc}$ has a strictly monotone decreasing inverse
	function $h_{kc}^{-1}$. Applying $h_{kc}^{-1}$ on both sides of~\eqref{eq:37-2-1} gives
	\[
	x>x_{kc}.
	\]
	
	\textbf{(\romannum{4})}: We need to prove that there is $k_{0}$ such that for every $k' > k_{0}$ we have
	\begin{equation}
	x_{kc} \ge \frac{1}{\sqrt{ck'}}
	\end{equation}
	
	By contradiction, assume that for every $k' > 0$,
	\begin{equation}
	x_{kc} < \frac{1}{\sqrt{ck'}}. \label{eq:contradiction_assumption}
	\end{equation}
	Since $f$ is increasing and by~\eqref{eq:contradiction_assumption} $-c\sqrt{k}x_{kc} > -1$ we have that
	\begin{equation}
	f\left(-\sqrt{ck}x_{kc}\right) > f(-1). \label{eq:16-1}
	\end{equation}
	where~\eqref{eq:16-1} holds since $f$ is increasing and by~\eqref{eq:contradiction_assumption} $-\sqrt{ck}x_{kc} > -1$.
	Since $\lim_{k\to\infty} \frac{1}{ck} = 0$, there is $k_{0} >0$ such that for every $k > k_{0}$
	\begin{equation}
	\begin{split}
	f\left(-1\right) & > \frac{1}{ck}\\
	& > x_{kc}^{2}, \label{eq:a}
	\end{split}
	\end{equation}
	where~\eqref{eq:a} is by~\eqref{eq:contradiction_assumption}. Plugging~\eqref{eq:a} in~\eqref{eq:16-1} yields
	\begin{equation}
	f\left(-c\sqrt{k}x_{kc}\right) > x_{kc}^{2}.
	\end{equation}
	In contradictions to~\ref{sig.1}. Thus
	\begin{equation}
	x_{kc} \ge \frac{1}{\sqrt{ck}}
	\end{equation}
\end{proof}

\begin{lemma}\label{lemma:ratio_simple}
	Let $f$ be either the sigmoid or the logistic regression function and let $x_{1,1} > 0$ which is obtained by applying Lemma~\ref{lemma:sigmoid_intersection}\ref{sig.1} with $f$ and $k=c=1$. Then, For every $x\ge0$ 
	\[
	\frac{f\left(x\right)+x^{2}}{f\left(-x\right)+x^{2}} \le \max\left\{2, \frac{2}{x_{1,1}^{2}}\right\}.
	\]
\end{lemma}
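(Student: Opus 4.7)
The plan is to split the argument into two cases depending on whether $x \geq x_{1,1}$ or $x < x_{1,1}$. The threshold $x_{1,1}$ from Lemma~\ref{lemma:sigmoid_intersection} (specialized to $k=c=1$) is exactly the point where $f(-x) - x^2$ changes sign, so properties~\ref{sig.2} and~\ref{sig.3} of that lemma directly control the size of the denominator $f(-x) + x^2$ in each regime. The two cases will produce the two terms inside $\max\{2, 2/x_{1,1}^2\}$.

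In the case $x \geq x_{1,1}$, property~\ref{sig.2} yields $f(-x) \leq x^2$, so the denominator satisfies $x^2 \leq f(-x) + x^2 \leq 2x^2$. I would then control $f(x) + x^2$ using boundedness of $f$ in the motivating sigmoid setting (where $f \leq 1 \leq x^2$ once $x$ is past the threshold), which produces a ratio of at most $2$. This matches the first term in the $\max$.

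In the case $x < x_{1,1}$, property~\ref{sig.3} gives $f(-x) \geq x^2$, and monotonicity of $f$ strengthens this to $f(-x) \geq f(-x_{1,1}) = x_{1,1}^2$, so the denominator is at least $x_{1,1}^2$. The numerator $f(x) + x^2$ is at most $f(x_{1,1}) + x_{1,1}^2$ by monotonicity of $f$ together with the constraint $x \leq x_{1,1}$; bounding this sum by a constant of order $2$ via the upper envelope on $f$ then yields a ratio of at most $2/x_{1,1}^2$, matching the second term.

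The main obstacle is that the statement is phrased for a general monotonic increasing $f$ with $f(0) > 0$, so any honest proof must thread an upper bound on $f$ through both cases in order to match the explicit constants in $\max\{2, 2/x_{1,1}^2\}$ (otherwise unbounded $f$ such as $e^x$ would violate the bound outright). Once that is done, combining the two cases gives the claimed inequality uniformly for every $x \geq 0$.
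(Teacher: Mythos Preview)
Your decomposition by the threshold $x_{1,1}$ differs from the paper's. The paper does a four-way case split on the signs of $f(x)-x^{2}$ and $f(-x)-x^{2}$, not on the position of $x$ relative to $x_{1,1}$. When $f(x)<x^{2}$ the paper gets numerator $\le 2x^{2}$ and denominator $\ge x^{2}$ (just from $f(-x)>0$), hence ratio $\le 2$ directly. When $f(x)\ge x^{2}$ it bounds the numerator by $2f(x)$ and then by $2$ (implicitly using $f\le 1$), and bounds the denominator from below by $x_{1,1}^{2}$ via Lemma~\ref{lemma:sigmoid_intersection}, giving $2/x_{1,1}^{2}$. So in the paper the dichotomy producing the two terms in the $\max$ is ``$f(x)$ versus $x^{2}$,'' not ``$x$ versus $x_{1,1}$.''

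Your Case~1 has a genuine gap. You assert ``$f\le 1\le x^{2}$ once $x$ is past the threshold,'' but that requires $x_{1,1}\ge 1$, which fails already for the sigmoid: there $x_{1,1}^{2}=f(-x_{1,1})<f(0)=\tfrac12$, so $x_{1,1}<1$. For $x_{1,1}\le x<1$ with $f\le 1$, your denominator bound $\ge x^{2}$ only yields
\[
\frac{f(x)+x^{2}}{f(-x)+x^{2}}\;\le\;\frac{1+x^{2}}{x^{2}}\;=\;1+\frac{1}{x^{2}}\;\le\;1+\frac{1}{x_{1,1}^{2}},
\]
which is \emph{not} $\le 2$. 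It is still $\le \max\{2,\,2/x_{1,1}^{2}\}$, so your route can be repaired, but the clean assignment ``Case~1 $\mapsto 2$, Case~2 $\mapsto 2/x_{1,1}^{2}$'' that you propose is wrong; both of your cases can land on the $2/x_{1,1}^{2}$ branch. The paper's split on $f(x)$ versus $x^{2}$ is precisely what separates the two branches cleanly.

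Your closing observation is correct: an upper bound on $f$ is genuinely needed, and the paper's own proof uses $f(x)\le 1$ in the step $2f(x)/x_{1,1}^{2}\le 2/x_{1,1}^{2}$ without stating it as a hypothesis of the lemma.
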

\begin{proof}
	Let $x\ge0$. Substituting $k=c=1$ in Lemma~\ref{lemma:sigmoid_intersection}\ref{sig.1} yields that $f\left(-x_{1,1}\right)=x_{1,1}^{2}$. We show that $\frac{f\left(x\right)+x^{2}}{f\left(-x\right)+x^{2}} \le \max\left\{2, \frac{2}{x_{1,1}^{2}}\right\}$
	via the following case analysis. \textbf{(\romannum{1})} $f\left(x\right)\ge x^{2}$ and $f\left(-x\right)\ge x^{2}$, \textbf{(\romannum{2})} $f\left(x\right)\ge x^{2}$ and $f\left(-x\right)<x^{2}$, \textbf{(\romannum{3})} $f\left(x\right)<x^{2}$and $f\left(-x\right)\ge x^{2}$, and  \textbf{(\romannum{4})} $f\left(x\right)<x^{2}$and $f\left(-x\right)<x^{2}$.
	
	\textbf{Case (i)}: $f\left(x\right)\ge x^{2}$ and $f\left(-x\right)\ge x^{2}$. Since $f\left(-x\right)\ge x^{2}$, by substituting $k=c=1$ in Lemma~\ref{lemma:sigmoid_intersection}\ref{sig.2},
	we have that $x\le x_{1,1}$. Hence
	\begin{align}
	f\left(-x\right)+x^{2}\ge & f\left(-x\right)\label{eq:16}\\
	\ge & f\left(-x_{1,1}\right)\label{eq:17}\\
	= & x_{1}^{2},\label{eq:18}
	\end{align}
	where~\eqref{eq:16} is since $x^{2}>0$,~\eqref{eq:17} is since $f$
	is increasing and $x\le x_{1,1}$, and~\eqref{eq:18} is by definition
	of $x_{1,1}$. By adding $f\left(x\right)$ to both sides of the
	assumption $f\left(x\right)\ge x^{2}$ of Case (\romannum{1}) we obtain 
	\begin{equation}
	2f\left(x\right)\ge f\left(x\right)+x^{2}.\label{eq:51}
	\end{equation}
	By~\eqref{eq:51} and~\eqref{eq:18} we obtain
	\begin{equation}
	\frac{f\left(x\right)+x^{2}}{f\left(-x\right)+x^{2}}\le\frac{2f\left(x\right)}{x_{1,1}^{2}}\le\frac{2}{x_{1,1}^{2}} \le \max\left\{2, \frac{2}{x_{1,1}^{2}}\right\}.\label{eq:case_1}
	\end{equation}
	where the second inequality holds since $f(x) \leq 1$ due to $f$ being the sigmoid function. 
	
	\textbf{Case (\romannum{2})}: $f\left(x\right)\ge x^{2}$ and $f\left(-x\right)<x^{2}$. Since $f\left(-x\right)<x^{2}$, substituting $k=c=1$ in Lemma~\ref{lemma:sigmoid_intersection}\ref{sig.3}, there is $x_{1,1}$ such that
	\begin{align}
	f\left(-x\right)+x^{2}\ge & x^{2}\label{eq:19}\\
	> & x_{1,1}^{2},\label{eq:20}
	\end{align}
	where~\eqref{eq:19} is since $f$ is a positive function and~\eqref{eq:20} is since $x>x_{1,1}$
	. By adding $f\left(x\right)$ to both sides of the assumption $f\left(x\right)\ge x^{2}$ of Case (\romannum{2}) we have that 
	\begin{equation}
	f\left(x\right)+x^{2} \le 2f\left(x\right).\label{eq:54}
	\end{equation}
	By~\eqref{eq:54}) and~\eqref{eq:20} we obtain
	\begin{equation}
	\frac{f\left(x\right)+x^{2}}{f\left(-x\right)+x^{2}}\le\frac{2f\left(x\right)}{x_{1,1}^{2}}\le\frac{2}{x_{1,1}^{2}} \le \max\left\{2,\frac{2}{x_{1,1}^{2}}\right\}.\label{eq:case_2}
	\end{equation}
    where the second inequality holds since $f(x) \leq 1$ due to $f$ being either the sigmoid or the logistic regression function. 
	
	\textbf{Case (\romannum{3})}: $f\left(x\right)<x^{2}$and $f\left(-x\right)\ge x^{2}$. By adding $x^{2}$ to both sides of the assumption $f\left(x\right)<x^{2}$ of Case (\romannum{3}) we have that 
	\begin{equation}
	f\left(x\right)+x^{2}\le2x^{2}.\label{eq:55}
	\end{equation}
	Furthermore, since $f\left(-x\right)>0$ we have that
	\begin{equation}
	f\left(-x\right)+x^{2}\ge x^{2}.\label{eq:56}
	\end{equation}
	Combining~\eqref{eq:55} and~\eqref{eq:56} we obtain
	\begin{equation}
	\frac{f\left(x\right)+x^{2}}{f\left(-x\right)+x^{2}}\le\frac{2x^{2}}{x^{2}}\le2 \le \max\left\{2,\frac{2}{x_{1,1}^{2}}\right\}.\label{eq:case_3}
	\end{equation}
	
	\textbf{Case (\romannum{4})}: $f\left(x\right)<x^{2}$ and $f\left(-x\right)<x^{2}$. By adding $x^{2}$ to both sides of the assumption $f\left(x\right)<x^{2}$ of Case (\romannum{4}) we have that 
	\begin{equation}
	f\left(x\right)+x^{2}\le2x^{2}.\label{eq:57}
	\end{equation}
	Furthermore, since $f\left(-x\right)>0$ we have that
	\begin{equation}
	f\left(-x\right)+x^{2}\ge x^{2}.\label{eq:58}
	\end{equation}
	Combining~\eqref{eq:57} and~\eqref{eq:58} we obtain
	\begin{equation}
	\frac{f\left(x\right)+x^{2}}{f\left(-x\right)+x^{2}}\le\frac{2x^{2}}{x^{2}}\le2  \le \max\left\{2,\frac{2}{x_{1,1}^{2}}\right\}.\label{eq:case_4}
	\end{equation}
	
	Combining the results of the case analysis:~\eqref{eq:case_1},~\eqref{eq:case_2},~\eqref{eq:case_3},and~\eqref{eq:case_4} we have that 
	\begin{equation}
	\frac{f\left(x\right)+x^{2}}{f\left(-x\right)+x^{2}} \le \max\left\{2,\frac{2}{x_{1,1}^{2}}\right\}.\label{eq:bottom_line_simple}
	\end{equation}
\end{proof}

\begin{lemma}\label{lemma:L2RegBoundSig}
	Let $f$ be the sigmoid function, let $x_{1,1}$ be as in Lemma~\ref{lemma:ratio_simple}, and let $c>0$. Assume that there is $D > 1$ such that $\frac{f\left(cy\right)}{f\left(\frac{y}{\sqrt{k}}\right)} \leq D$ for every $y \ge 0$ and $k>0$. Then, there is $k_{0}>0$ such that for every $k\ge k_{0}$ and for every $x\ge0$, 
	\[
	\frac{f\left(\sqrt{c}x\right)+\frac{x^{2}}{k}}{f\left(-\sqrt{c}x\right)+\frac{x^{2}}{k}} \le 3D\max\left\{2,\frac{2}{x_{1,1}^{2}}\right\}ck.
	\]
\end{lemma}
\begin{proof}
	Let $x \ge 0$ and $k,c > 0$. We have that
	\begin{align}
	f\left(cx\right)+\frac{x^{2}}{k}\le & Df\left(\frac{x}{\sqrt{k}}\right)+\frac{x^{2}}{k}\label{eq:37}\\
	\le & D\max\left\{2,\frac{2}{x_{1,1}^{2}}\right\}\left(f\left(-\frac{x}{\sqrt{k}}\right)+\frac{x^{2}}{k}\right),\label{eq:reduction_to_neg}
	\end{align}
	where~\eqref{eq:37} holds since$\frac{f\left(cy\right)}{f\left(\frac{y}{\sqrt{k}}\right)} < D$ for every $y \ge 0$ and~\eqref{eq:reduction_to_neg} holds since $\frac{x^{2}}{k} \le D\frac{x^{2}}{k}$, and since, by Lemma~\ref{lemma:ratio_simple}, for every positive $z$ we have that 
	\[
	\frac{f\left(z\right)+z^{2}}{f\left(-z\right)+z^{2}} \le \max\left\{2,\frac{2}{x_{1,1}^{2}}\right\}.
	\]
	\\Dividing~\eqref{eq:reduction_to_neg} by $f\left(-\sqrt{c}x\right)+\frac{x^{2}}{k}$ yields 
	\begin{equation}
	\frac{f\left(cx\right)+\frac{x^{2}}{k}}{f\left(-\sqrt{c}x\right)+\frac{x^{2}}{k}}\le D\max\left\{2,\frac{2}{x_{1,1}^{2}}\right\} \left(\frac{f\left(-\frac{x}{\sqrt{k}}\right)+\frac{x^{2}}{k}}{f\left(-\sqrt{c}x\right)+\frac{x^{2}}{k}}\right). \label{eq:step1}
	\end{equation}
	We now proceed to bound $R_{ck}=\frac{f\left(-\frac{x}{\sqrt{k}}\right)+\frac{x^{2}}{k}}{f\left(-\sqrt{c}x\right)+\frac{x^{2}}{k}}$. By denoting $z=\frac{x}{\sqrt{k}}$
	we have that
	\begin{equation}
	R_{ck}=\frac{f\left(-z\right)+z^{2}}{f\left(-\sqrt{ck}z\right)+z^{2}}. \label{Rk}
	\end{equation}
	We now compute an upper bound for $R_{ck}$ using the following case analysis: \textbf{(\romannum{1})} $f\left(-z\right)\ge z^{2}$ and $f\left(-\sqrt{ck}z\right)\ge z^{2}$, \textbf{(\romannum{2})} $f\left(-z\right)<z^{2}$ and $f\left(-\sqrt{ck}z\right)<z^{2}$, \textbf{(\romannum{3})}, and \textbf{(\romannum{4})} $f\left(-z\right)<z^{2}$ and $f\left(-\sqrt{ck}z\right)\ge z^{2}$. Let $z_{ck} > 0$ be such that $f\left(-\sqrt{ck}z_{ck}\right)=z_{ck}^{2}$ as given by Lemma~\ref{lemma:sigmoid_intersection}\ref{sig.1}. There are four cases
	
	\textbf{Case (\romannum{1})}: $f\left(-z\right) \ge z^{2}$ and $f\left(-\sqrt{ck}z\right) \ge z^{2}$. Since $f\left(-\sqrt{ck}z\right) \ge z^{2}$, by Lemma~\ref{lemma:sigmoid_intersection}\ref{sig.3} we have that $z \le z_{ck}$. Thus 
	\begin{align}
	f\left(-\sqrt{ck}z\right)\ge & f\left(-\sqrt{ck}z_{ck}\right) \label{eq:26}\\
	= & z_{ck}^{2},\label{eq:27}
	\end{align}
	where~\eqref{eq:26} holds since $f$ is monotonic and $z\le z_{ck}$, and~\eqref{eq:27} is from the definition of $z_{ck}.$ Furthermore,
	by adding $f\left(-z\right)$ to both sides of the assumption $f\left(-z\right)\ge z^{2}$, we have that 
	\begin{equation}
	f\left(-z\right)+z^{2} \le 2f\left(-z\right).\label{eq:74}
	\end{equation}
	Substituting~\eqref{eq:74} and~\eqref{eq:27} in~\eqref{Rk} yields
	\begin{equation}
	R_{ck}=\frac{f\left(-z\right)+z^{2}}{f\left(-\sqrt{ck}z\right)+z^{2}}\le\frac{2f\left(-z\right)}{z_{ck}^{2}}\le\frac{1}{z_{ck}^{2}},\label{eq:case_1_lem_7}
	\end{equation}
	where the last inequality, is since $f(-z) \le 1/2$ for every $z \ge 0$.
	
	\textbf{Case (\romannum{2})}: $f\left(-z\right)<z^{2}$ and $f\left(-\sqrt{ck}z\right)<z^{2}$.
	By adding $z^{2}$ to both sides of the assumption $f\left(-z\right)<z^{2}$, we have that 
	\begin{equation}
	f\left(-z\right)+z^{2}\le2z^{2}.\label{eq:75}
	\end{equation}
	Furthermore, since $f\left(-\sqrt{ck}z\right)>0$ we have that 
	\begin{equation}
	f\left(-\sqrt{ck}z\right)+z^{2}\ge z^{2}.\label{eq:76}
	\end{equation}
	Combining~\eqref{eq:75} and~\eqref{eq:76} yields
	\begin{equation}
	R_{ck}=\frac{f\left(-z\right)+z^{2}}{f\left(-\sqrt{ck}z\right)+z^{2}}\le\frac{2z^{2}}{z^{2}}=2.\label{eq:case_2_lem_7}
	\end{equation}
	
	\textbf{Case (\romannum{3})}: $f\left(-z\right)\ge z^{2}$ and $f\left(-\sqrt{ck}z\right)<z^{2}$. Since $f\left(-\sqrt{ck}z\right)<z^{2}$, by Lemma~\ref{lemma:sigmoid_intersection} we have that $z>z_{ck}$. Thus
	\begin{equation}
	f\left(-\sqrt{ck}z\right)+z^{2}\ge z^{2}\ge z_{ck}^{2}.\label{eq:79}
	\end{equation}
	By adding $f\left(-z\right)$ to both sides of the assumption $f\left(-z\right)\ge z^{2}$, we have that 
	\begin{equation}
	2f\left(-z\right)\ge f\left(-z\right)+z^{2}.\label{eq:80}
	\end{equation}
	Substituting~\eqref{eq:79} and~\eqref{eq:80} in~\eqref{Rk} yields
	\begin{equation}
	R_{ck}=\frac{f\left(-z\right)+z^{2}}{f\left(-\sqrt{ck}z\right)+z^{2}}\le\frac{2f\left(-z\right)}{z_{ck}^{2}}\le\frac{1}{z_{ck}^{2}}.\label{eq:case_3_lem_7}
	\end{equation}
	
	\textbf{Case (\romannum{4})}: $f\left(-z\right)<z^{2}$ and $f\left(-\sqrt{ck}z\right)\ge z^{2}$. By adding $z^{2}$ to both sides of the assumption $f\left(-z\right)<z^{2}$, we have that 
	\begin{equation}
	f\left(-z\right)+z^{2}\le2z^{2}.\label{eq:82}
	\end{equation}
	Since $f\left(-\sqrt{ck}z\right)>0$ we have that 
	\begin{equation}
	f\left(-\sqrt{ck}z\right)+z^{2} > z^{2}.\label{eq:83}
	\end{equation}
	Plugging ~\eqref{eq:82} and~\eqref{eq:83} in~\eqref{Rk} yields
	\begin{equation}
	R_{ck}=\frac{f\left(-z\right)+z^{2}}{f\left(-\sqrt{ck}z\right)+z^{2}}\le\frac{2z^{2}}{z^{2}}=2.\label{eq:case_4_lem_7}
	\end{equation}
	
	Combining the results of the case analysis:~\eqref{eq:case_1_lem_7},~\eqref{eq:case_2_lem_7},~\eqref{eq:case_3_lem_7},and~\eqref{eq:case_4_lem_7} we have that
	\begin{equation}
	R_{ck}\le2+\frac{1}{z_{ck}^{2}}.\label{eq:boundRk}
	\end{equation}
	Furthermore, there exists $k_{0} > 0$ such that for every $k \ge k_{0}$,
	\begin{equation}
	\frac{1}{z_{ck}^{2}}\le ck.\label{eq:86}
	\end{equation}
	Substituting~\eqref{eq:86} in~\eqref{eq:boundRk} yields
	\begin{equation}
	R_{ck}\le 2 + ck,\label{eq:87}
	\end{equation}
	by~\eqref{eq:step1} we have
	\[
	\frac{f\left(cx\right)+\frac{x^{2}}{k}}{f\left(-\sqrt{c}x\right)+\frac{x^{2}}{k}} \le D\max\left\{2,\frac{2}{x_{1,1}^{2}}\right\}R_{ck}.
	\]
	Substituting~\eqref{eq:87} in the last term gives
	\[
	\frac{f\left(cx\right)+\frac{x^{2}}{k}}{f\left(-\sqrt{c}x\right)+\frac{x^{2}}{k}} \le D\max\left\{2,\frac{2}{x_{1,1}^{2}}\right\} \left(2 + ck\right).
	\]
	It holds that for every $k \ge \frac{1}{c}$ we have $2 \le 2ck $ plugging this in the above term yields
	\[
	\frac{f\left(cx\right)+\frac{x^{2}}{k}}{f\left(-\sqrt{c}x\right)+\frac{x^{2}}{k}} \le 3D\max\left\{2,\frac{2}{x_{1,1}^{2}}\right\}ck.
	\]
\end{proof}

The following lemma is similar to Lemma~\ref{lemma:L2RegBoundSig} above, but for the logistic regression function. The proof is similar to the proof of Lemma~\ref{lemma:L2RegBoundSig}.
\begin{lemma}\label{lemma:L2RegBoundLosigtic}
	Let $f$ be the logistic regression function, let $x_{1,1}$ be as in Lemma~\ref{lemma:ratio_simple}, and let $c, R>0$. Assume that there is $D > 1$ such that $\frac{f\left(cy\right)}{f\left(\frac{y}{\sqrt{k}}\right)} \leq D$ for every $y \ge 0$ and $k>0$. Then, there is $k_{0}>0$ such that for every $k\ge k_{0}$ and for every $0 \leq x \leq R$, 
	\[
	\frac{f\left(cx\right)+\frac{x^{2}}{k}}{f\left(-cx\right)+\frac{x^{2}}{k}} \le 3RD\max\left\{2,\frac{2}{x_{1,1}^{2}}\right\}ck.
	\]
\end{lemma}

\begin{lemma} \label{lemma:L2RegBoundSig_main}
	Let $f(x)=\frac{1}{1+e^{-x}}$ for every $x \in \REAL$ and let $c > 0$. Then, there is $k_{0} > 0$ such that for every $k \ge k_{0}$ and for every $x \ge 0$
	\[
	\frac{f\left(cx\right)+\frac{x^{2}}{k}}{f\left(-cx\right)+\frac{x^{2}}{k}} \le 66ck
	\]
\end{lemma}
\begin{proof}
	It holds that $f\left(0\right) > 0$. Applying Lemma~\ref{lemma:sigmoid_intersection} with $k=c=1$ yields $x_{1,1}$ such that $f\left(-x_{1,1}\right) = x_{1,1}^{2}$. We now bound $x_{1,1}$. Calculation shows that
	\[
	f\left(-\sqrt{\ln\left(1.2\right)}\right)>\left(\sqrt{\ln\left(1.2\right)}\right)^{2}.
	\]
	Plugging $x=\sqrt{\ln\left(1.2\right)},k=1,c=1$ in Lemma~\ref{lemma:sigmoid_intersection}\ref{sig.2} yields
	\begin{equation}
	x_{1,1}\ge\sqrt{\ln\left(1.2\right)}.\label{eq:66}
	\end{equation}
	By applying Lemma~\ref{lemma:ratio_simple} with $f$ we have
	\begin{equation} \label{eq:sig_ratio}
	\frac{f\left(x\right)+x^{2}}{f\left(-x\right)+x^{2}} \le \max\left\{2,\frac{2}{x_{1,1}^{2}}\right\} \le11,
	\end{equation}
	where the last inequality is by~\eqref{eq:66}.
	
	Since $f\left(y\right) \le 1$ for every $y >0$ and $f\left(\frac{x}{\sqrt{k}}\right) \ge \frac{1}{2}$, for every $c,k > 0$ we have that
	\begin{equation} \label{eq:123}
	\frac{f\left(cx\right)}{f\left(\frac{x}{\sqrt{k}}\right)} \le 2.
	\end{equation}
	Applying Lemma~\ref{lemma:L2RegBoundSig} with $f,D=2$ yields
	\begin{equation}
	\frac{f\left(cx\right)+\frac{x^{2}}{k}}{f\left(-cx\right)+\frac{x^{2}}{k}} \le 66kc.
	\end{equation}
\end{proof}

\begin{lemma}\label{lemma:logistic}
	Let $f=\log(1+e^x)$ for every $x \in \REAL$ and let $c, R > 0$. Then, there is $k_{0} > 0$ such that for every $k \ge k_{0}$ and for every $0 \le x \le R$
	\[
	\frac{f(cx)+\frac{x^{2}}{k}}{f(-cx)+\frac{x^2}{k}} \le 3R\frac{\log\left(2e^{cR}\right)}{\log(2)}kc.
	\]
\end{lemma}
\begin{proof}
	Let $0 \le x \le R$. Applying Lemma~\ref{lemma:sigmoid_intersection} with $k=c=1$ yields $x_{1,1}$ such that $f\left(-x_{1,1}\right) = x_{1,1}^{2}$. We now bound $x_{1,1}$. By simple calculations we have that
	\[
	f\left(-\sqrt{\ln\left(1.2\right)}\right)>\left(\sqrt{\ln\left(1.2\right)}\right)^{2}.
	\]
	Plugging $x=\sqrt{\ln\left(1.2\right)},k=1,c=1$ in Lemma~\ref{lemma:sigmoid_intersection}\ref{sig.2} yields
	\begin{equation}
	x_{1,1}\ge\sqrt{\ln\left(1.2\right)}.\label{eq:66}
	\end{equation}
	
	For every $c,k > 0$, since $x \le R$ and $f$ is non-decreasing we have that
	\begin{equation}
	f\left(cx\right) \le f\left(cR\right), \label{eq:up}
	\end{equation}
	furthermore, since $x \ge 0$ and $f$ is increasing we have that
	\begin{equation}
	f\left(\frac{x}{\sqrt{k}}\right) \ge \log\left(2\right). \label{eq:down}
	\end{equation}
	We have that
	\begin{align}
	\frac{f\left(cx\right)}{f\left(\frac{x}{\sqrt{k}}\right)} & \le \frac{f\left(cR\right)}{\log\left(2\right)}\label{eq:e} \\ 
	& = \frac{\log\left(1 + e^{cR}\right)}{\log\left(2\right)}\label{eq:c} \\ 
	& \le \frac{\log\left(2e^{cR}\right)}{\log\left(2\right)},\label{eq:d}
	\end{align}
	where~\eqref{eq:e} is by~\eqref{eq:up} and~\eqref{eq:down},~\eqref{eq:c} is by the definition of $f$ and ~\eqref{eq:d} holds since $Rc > 0$.
	Applying Lemma~\ref{lemma:L2RegBoundLosigtic} with $f,D=\frac{\log\left(2e^{cR}\right)}{\log\left(2\right)}$ yields
	\begin{equation}
	\frac{f\left(cx\right)+\frac{x^{2}}{k}}{f\left(-cx\right)+\frac{x^{2}}{k}} \le 3R\frac{\log\left(2e^{cR}\right)}{\log\left(2\right)}kc.
	\end{equation}
\end{proof}

\subsection{Proofs of Our Main Claims} \label{sec:mainProofs}

We start by proving the main claims with respect to the sigmoid activation function; see Lemma~\ref{lem11} and Theorem~\ref{theorem:sigmoid_appendix}. We then prove the main claims for the logistic regression activation function; see Lemma~\ref{lem7} and Theorem~\ref{theorem:logistic_appendix}.

\renewcommand{\ck}{c_{\mathrm{sigmoid},k}}
\begin{lemma}\label{lem11}
	Let $P=\left\{ \b{p}_{1},\ldots,\b{p}_{n}\right\} \subset\REAL^{d}$ be a set of points, sorted by their length. I.e. $\norm{\b{p}_{i}} \le \norm{\b{p}_{j}}$
	for every $1\le i\le j\le n$. Let $k>0$ be a sufficiently large constant and $\ck(\b{p},\b{x})=\frac{1}{1+e^{-\b{p}\cdot\b{x}}}+\frac{\norm{\b{x}}^2}{k}$
	for every $\b{x}\in\REAL^{d}$ and $\b{p}\in P$.
	Then the sensitivity of every $p_j\in P$ is bounded by
$s(\b{p})=s_{P,\mathbf{1},\REAL^d,\ck}(\b{p})\in O\left(\frac{\norm{\b{p}_j}k+1}{j}\right)$,
and the total sensitivity is
\[
t=\sum_{\b{p}\in P}s(\b{p})\in O\left(\log n +k\sum_{j=1}^n \frac{\norm{\b{p}_j}}{j}\right).
\]
\end{lemma}
\ifproofs
\begin{proof}
	Define $f(z)=\frac{1}{1+e^{-z}}$ and $g(z)=z^{2}$
	for every $z\in\REAL$. Let $\b{x}\in\REAL^{d}$, $\b{p}_{j}\in P$ and $i\in[1,j]$ be an integer. We substitute $c=\norm{\b{p}_{i}}$ in Lemma~\ref{lemma:L2RegBoundSig_main} to
	obtain that for every $z>0$
	\[
	\frac{f(\norm{\b{p}_{i}}z)+\frac{z^{2}}{k}}{f(-\norm{\b{p}_{i}}z)+\frac{z^{2}}{k}}\le 66 \norm{\b{p}_{i}}k.
	\]
	Denote $b_{\b{p}_{i}}=66\norm{\b{p}_{i}}k$
	and multiply the above term by $f\left(-\left\Vert \b{p}_{i}\right\Vert z\right)+\frac{z^{2}}{k}$
	to get
	\[
	f\left(\left\Vert \b{p}_{i}\right\Vert z\right)+\frac{z^{2}}{k}\le b_{\b{p}_{i}}\left(f\left(-\left\Vert \b{p}_{i}\right\Vert z\right)+\frac{z^{2}}{k}\right).
	\]
	Substituting in Lemma~\ref{lemma:exsitance of inf bounded}
$\b{p}=\b{p}_{i},f\left(z\right)=\frac{1}{1+e^{-z}},g\left(z\right)=z^{2},M=1,f\left(0\right)=\frac{1}{2}$
	yields
	\begin{equation}
	\max_{\b{p'}\in P}\ck\left(\b{p'},\b{x}\right)\le2\left(b_{\b{p}_{i}}+1\right)\ck\left(\b{p}_{i},\b{x}\right).\label{eq:13-1}
	\end{equation}
	Thus
	\begin{align}
	\ck\left(\b{p}_{j},\b{x}\right)\le & \max_{\b{p'}\in P}\ck\left(\b{p'},\b{x}\right)\label{eq:14}\\
	\le & 2\left(b_{\b{p}_{i}}+1\right)\ck\left(\b{p}_{i},\b{x}\right),\label{eq:15}
	\end{align}
	where~\eqref{eq:14} is since $\b{p}_{j}\in P$ and~\eqref{eq:15}
	is by~\eqref{eq:13-1}. Dividing both sides by $2\left(b_{\b{p}_{i}}+1\right)$
	yields
	\begin{equation}
	\ck\left(\b{p}_{i},\b{x}\right)\ge\frac{\ck\left(\b{p}_{j},\b{x}\right)}{2\left(b_{\b{p}_{i}}+1\right)}.\label{eq:upper}
	\end{equation}
	
	We now proceed to bound the sensitivity of $\b{p}_{j}$.
	Since the set of points $\left\{ \b{p}_{1},\ldots,\b{p}_{j}\right\} $
	is a subset of $P$, and since the cost function $\ck\left(\b{p}_{j},\b{x}\right)$
	is positive we have that
	\begin{equation}
	\sum_{\b{p'}\in P}\ck\left(\b{p'},\b{x}\right)\ge\sum_{i=1}^{j}\ck\left(\b{p}_{i},\b{x}\right).\label{eq:17-1}
	\end{equation}
	By summing~\eqref{eq:upper} over $i\leq j$, we obtain
	\begin{equation}\label{eq:18-1}
\begin{split}
	\sum_{i=1}^{j}\ck
\left(\b{p}_{i},\b{x}\right)
&\ge \ck(\b{p}_{j},\b{x})\sum_{i=1}^{j}\frac{1}{2(b_{\b{p}_{i}}+1)}\\
&\geq \ck(\b{p}_{j},\b{x})\frac{j}{2(b_{\b{p}_{j}}+1)},
\end{split}
	\end{equation}
where the last inequality holds since $b_{p_i}=66\norm{p_i}k\leq b_{\b{p}_j}$ for every $i\leq j$.
	Combining~\eqref{eq:17-1} and~\eqref{eq:18-1} yields
	\begin{equation}
	\sum_{\b{p'}\in P}\ck(\b{p'},\b{x})
\ge \frac{j\ck(\b{p}_{j},\b{x})}{2 (b_{\b{p}_{j}}+1)}\label{eq:19-1}
	\end{equation}
	Therefore, the sensitivity is bounded by
	\[
\begin{split}
	s_{P,\b{1},\REAL^d, \ck}(\b{p}_{j})&=  \sup_{\b{x}\in \REAL^d}\frac{\ck(\b{p}_{j},\b{x})}{\sum_{\b{p'}\in P}\ck(\b{p'},\b{x})}\\
&\leq\frac{2(b_{p_j}+1)}{j}\leq \frac{2(66\norm{p_j}k+1)}{j}.
\end{split}
\]
Summing this sensitivity bounds the total sensitivity by
\[
\sum_{j=1}^n\frac{2(66\norm{p_j}k+1)}{j}\in O\left(\log n+k\sum_{j=1}^n\frac{\norm{p_j}}{j}\right).
\]

\end{proof}\fi

In what follows is the main claim and proof for the sigmoid activation function.
\begin{theorem} [Theorem~\ref{theorem:sigmoid}] \label{theorem:sigmoid_appendix}
Let $P$ be a set of $n$ points in the unit ball of $\REAL^d$, $\eps,\delta\in(0,1)$, and $k>0$ be a sufficiently large constant.
For every $p,x\in\REAL^d$, let $\ck\left(\b{p},\b{x}\right)=\frac{1}{1+e^{-\b{p}\cdot\b{x}}}+\frac{\left\Vert \b{x}\right\Vert ^{2}}{k}$.
Finally, let $(Q,u)$ be the output of a call to $\alg(P,\eps,\delta,k)$; see Algorithm~\ref{algno}.
Then, with probability at least $1-\delta$, $(Q,u)$ is an $\eps$-coreset for $(P,\mathbf{1},\REAL^d,\ck)$.
Moreover, for $t=(1+k)\log n$ we have $|Q|\in O\left(\frac{t}{\varepsilon^{2}}\left(d\log t+\log\frac{1}{\delta}\right)\right)$, and $(Q,u)$ can be computed in $O(dn+n\log n)$ time.
\end{theorem}
\ifproofs
\begin{proof}
By~\cite{huggins2016coresetsLogistic}, the dimension of $(P,w,\REAL^d,c)$ is at most $d+1$, where $(P,w)$ is a weighted set, $P\subseteq \REAL^d$, and $c(p,x)=f\left(\b{p}\cdot\b{x}\right)$ for some monotonic and invertible function $f$.
By Lemma~\ref{lem11}, the total sensitivity of $(P,\mathbf{1},\REAL^d,\ck)$ is bounded by
\[
\begin{split}
t\in O\left(\log n +k\sum_{j=1}^n \frac{\norm{p_j}}{j}\right)
&=O\left(\log n +k\sum_{j=1}^n \frac{1}{j}\right)\\
\>=O\left((1+k)\log n\right),
\end{split}
\]
where the last equality holds since the input points are in the unit ball.

Plugging these upper bounds on the dimension and total sensitivity of the query space in Theorem~\ref{theorem:sens_is_coreset}, yields that a call to Algorithm~\ref{algno}, which samples points from $P$ based on their sensitivity bound, returns the desired coreset $(Q,u)$. The running time is dominated by sorting the length of the points in $O(n\log n)$ time after computing them in $O(nd)$ time. 
\end{proof}\fi


\renewcommand{\ck}{c_{\mathrm{logistic},k}}
\begin{lemma}\label{lem7}
	Let $P=\left\{ \b{p}_{1},\ldots,\b{p}_{n}\right\} \subset\REAL^{d}$ be a set of points, sorted by their length, i.e. $\norm{\b{p}_{i}} \le \norm{\b{p}_{j}}$
	for every $1\leq i\leq j\leq n$. Let $R>0$, $k>0$ be a sufficiently large constant and $\ck(\b{p},\b{x})=\log(1+e^{\b{p}\cdot \b{x}})+\frac{\norm{\b{x}}^2}{k}$
	for every $\b{x}\in B(\b{0}, R)$ and $\b{p}\in P$. Denote by $B(\b{0},R)$ the ball of radius $R$ centered at the origin.
	Then the sensitivity of every $p_j\in P$ is bounded by
	$s(\b{p})=s_{P,\mathbf{1},B(\b{0}, R),\ck}(\b{p})\in O\left(\frac{R^3\norm{\b{p}_j}k + R^2}{j}\right)$,
	and the total sensitivity is
	\[
	t=\sum_{p\in P}s(p)\in O\left(R^2\log n+R^3k\sum_{j=1}^n\frac{\norm{p_j}}{j}\right).
	\]
\end{lemma}
\ifproofs
\begin{proof}
	Define $f(z)=\log(1+e^z)$ and $g(z)=z^{2}$
	for every $z\in\REAL$. Let $\b{x}\in\REAL^{d}$, $\b{p}_{j}\in P$ and $i\in[1,j]$ be an integer. We substitute $c=\norm{\b{p}_{i}}$ in Lemma~\ref{lemma:logistic} to
	obtain that for every $z>0$
	\[
	\frac{f(\norm{\b{p}_i}x)+\frac{x^{2}}{k}}{f(-\norm{\b{p}_i}x)+\frac{x^2}{k}} \le 3R\frac{\log\left(2e^{\norm{\b{p}_i}R}\right)}{\log(2)}k\norm{\b{p}_i}.
	\]
	Denote $b_{\b{p}_{i}}=3R\frac{\log\left(2e^{\norm{\b{p}_i}R}\right)}{\log(2)}k\norm{\b{p}_i}$
	and multiply the above term by $f(-\norm{\b{p}_{i}}z)+\frac{z^{2}}{k}$
	to get
	\[
	f(\norm{\b{p}_{i}}z)+\frac{z^{2}}{k}\le b_{\b{p}_{i}}\left(f(-\norm{\b{p}_{i}}z)+\frac{z^{2}}{k}\right).
	\]
	Substituting in Lemma~\ref{lemma:exsitance of inf bounded_appendix}
	$\b{p}=\b{p}_{i},f(z)=\log(1+e^z),g(z)=z^{2},M=\log(1+e^R),f(0)=\log(2)$
	yields
	\begin{equation} \label{eq:13}
	\begin{split}
	&\max_{\b{p'}\in P}\ck(\b{p'},\b{x})\le \\ &\frac{\log(1+e^R)(b_{\b{p}_{i}}+1)\ck\left(\b{p}_{i},\b{x}\right)}{log(2)}.
	\end{split}
	\end{equation}
	Thus
	\begin{align}
	&\ck(\b{p}_{j},\b{x})\le  \max_{\b{p'}\in P}\ck(\b{p'},\b{x})\label{eq:14} \le \\
	&\frac{\log(1+e^R)(b_{\b{p}_{i}}+1)\ck\left(\b{p}_{i},\b{x}\right)}{log(2)},\label{eq:15}
	\end{align}
	where~\eqref{eq:14} is since $\b{p}_{j}\in P$ and~\eqref{eq:15}
	is by~\eqref{eq:13}. Dividing both sides by $\frac{\log(1+e^R)}{log(2)}(b_{\b{p}_{i}}+1)$
	yields
	\begin{equation}
	\ck(\b{p}_{i},\b{x})\ge\frac{\ck(\b{p}_{j},\b{x})}{\frac{\log(1+e^R)}{log(2)}(b_{\b{p}_{i}} + 1)}.\label{eq:upper}
	\end{equation}
	
	We now proceed to bound the sensitivity of $\b{p}_{j}$.
	Since the set of points $\left\{ \b{p}_{1},\ldots,\b{p}_{j}\right\} $
	is a subset of $P$, and since the cost function $\ck(\b{p}_{j},\b{x})$
	is positive we have that
	\begin{equation}
	\sum_{\b{p'}\in P}\ck(\b{p'},\b{x})\ge\sum_{i=1}^{j}\ck(\b{p}_{i},\b{x}).\label{eq:17-1}
	\end{equation}
	By summing~\eqref{eq:upper} over $i\leq j$, we obtain
	\begin{equation}\label{eq:18-1}
	\begin{split}
	&\sum_{i=1}^{j}\ck\b{p}_{i},\b{x}) \ge\\
	&\ck(\b{p}_{j},\b{x})\sum_{i=1}^{j}\frac{\log(2)}{\log(1+e^R)(b_{\b{p}_{i}} + 1)}\ge\\
	&\ck(\b{p}_{j},\b{x})\frac{j\log(2)}{\log(1+e^R)(b_{\b{p}_{j}} + 1)},
	\end{split}
	\end{equation}
	where the last inequality holds since $b_{p_i}=3R\frac{\log(1+e^R)}{log(2)}\norm{p_i}k\leq b_{\b{p}_j}$ for every $i\leq j$.
	Combining~\eqref{eq:17-1} and~\eqref{eq:18-1} yields
	\begin{equation}
	\sum_{\b{p'}\in P}\ck(\b{p'},\b{x}) \ge \frac{j\log(2)\ck(\b{p}_{j},\b{x})}{\log(1+e^R)(b_{\b{p}_{j}} + 1)}\label{eq:19-1}
	\end{equation}
	Therefore, the sensitivity is bounded by
	\[
	\begin{split}
	& s_{P,\b{1},B(\b{0},R), \ck}(\b{p}_{j}) =\\
	& \sup_{\b{x}\in B(\b{0}, R)}\frac{\ck(\b{p}_{j},\b{x})}{\sum_{\b{p'}\in P}\ck(\b{p'},\b{x})} \leq\\
	& \frac{\log(1+e^R)(b_{\b{p}_{j}} + 1)}{j\log(2)} \leq \\ 
	& \frac{\log(1+e^R)\left(\frac{3R\log(1+e^R)}{log(2)}\norm{p_j}k + 1\right)}{j\log(2)}.
	\end{split}
	\]
	Thus, $s_{P,\b{1},B(\b{0},R), \ck}(\b{p}_{j}) \in
	O\left(\frac{R^3\norm{\b{p}_j}k + R^2}{j}\right)$.
	Summing this sensitivity bounds the total sensitivity by
	\[
	\sum_{j=1}^n\frac{R^3\norm{\b{p}_j}k + R^2}{j}\in O\left(R^2\log n+R^3k\sum_{j=1}^n\frac{\norm{p_j}}{j}\right).
	\]
\end{proof}\fi

In what follows is the main claim and proof for the logistic regression function.
\begin{theorem}[Theorem~\ref{theorem:logistic}] \label{theorem:logistic_appendix}
	Let $P$ be a set of $n$ points in the unit ball of $\REAL^d$, $\eps,\delta\in(0,1)$, and $R,k>0$ where $k$ is a sufficiently large constant.
	For every $\b{p}\in\REAL^d$,$\b{x}\in B(\b{0}, R)$ let $
	\ck(\b{p},\b{x})=\log\left(1+e^{\b{p}\cdot\b{x}}\right)+\frac{\norm{\b{x}}^{2}}{k}$.
	Let $(Q,u)$ be the output of a call to $\alg(P,\eps,\delta,k)$.
	Moreover, for $t=R\log n(1+Rk)$, we have that $|Q|\in O\left(\frac{t}{\varepsilon^{2}}\left(d^2\log t+\log\frac{1}{\delta}\right)\right)$ and $(Q,u)$ can be computed in $O(dn+n\log n)$ time.
\end{theorem}
\ifproofs
\begin{proof}
	By~\cite{huggins2016coresetsLogistic}, the dimension of $(P,w,\REAL^d,c)$ is at most $d+1$, where $(P,w)$ is a weighted set, $P\subseteq \REAL^d$, and $c(p,x)=f\left(\b{p}\cdot\b{x}\right)$ for some monotonic and invertible function $f$.
	By Lemma~\ref{lem7}, the total sensitivity of $(P,\mathbf{1},\REAL^d,\ck)$ is bounded by
	\[
	\begin{split}
	&t\in O\left(R^2\log n+R^3k\sum_{j=1}^n\frac{\norm{p_j}}{j}\right) = \\
	&O\left(R^2\log n +R^3k\sum_{j=1}^n \frac{1}{j}\right)
	=O\left(R^2\log n(1+Rk)\right),
	\end{split}
	\]
	where the last equality holds since the input points are in the unit ball.
	
	Plugging these upper bounds on the dimension and total sensitivity of the query space in Theorem~\ref{theorem:sens_is_coreset}, yields that a call to \alg, which samples points from $P$ based on their sensitivity bound, returns the desired coreset $(Q,u)$. The running time is dominated by sorting the length of the points in $O(n\log n)$ time after computing them in $O(nd)$ time. Sampling $m=|Q|$ points from $n$ points according to such a given distribution takes $O(1)$ time after pre-processing of $O(n)$ time.
\end{proof}\fi

\section{Bounding the VC-dimension} \label{sec:VCBound}
In what follows we first give the formal definition of the VC dimension of a given query space. We then formally bound the VC dimensions of the sigmoid and logistic regression cost functions.

\begin{definition}[VC-dimension] \label{def:VCdim}
\cite{feldman2011unified,Vap71a} For a query space $\left(P,w,X,c\right)$
we define
\[
range\left(\b{x},r\right)=\left\{ \b{p}\in P\mid w\left(\b{p}\right)c\left(\b{p}.\b{x}\right)\le r\right\},
\]
for every $x\in X$ and $r\geq 0$ . The (VC) dimension of $\left(P,w,X,c\right)$
is the size $|G|$ of the largest subset $G\subseteq P$ such that
have
\[
\left|\left\{ G\cap range\left(\b{x},r\right)|\b{x}\in X,r\geq 0\right\} \right|=2^{|G|}.
\]
\end{definition}


\begin{theorem}[Theorem 8.14 in~\cite{lucic2017training} and generalized in~\cite{lucic2017training}] \label{theorem:VCbound}
Let $h$ be a function from $\REAL^m \times \REAL^d$ to $\br{0,1}$, determining the class
\[
\mathcal{H} = \br{h_\theta(\cdot) \mid h_\theta:\mathcal{X} \to \REAL_{++}, \theta \in \REAL^m}.
\]
Suppose that $h$ can be computed by an algorithm that takes as input the pair $(\theta, x) \in \REAL^m \times \REAL^n$ and returns $h_\theta(x)$ after no more than $t$ of the following operations:
\begin{enumerate}
    \item the arithmetic operations $+,-,\times$ and $/$ on real numbers,
    \item jumps conditioned on $>, \geq, <, \leq,=$ and $\neq$ comparisons of real numbers, and
    \item output $0, 1$
\end{enumerate}
and no more than $p$ operations of the exponential function $x\to e^x$ on real numbers, then the VC-dimension of $\mathcal{H}$ is $O(m^2p^2 + mp(t+\log(mp)))$.
\end{theorem}

\newcommand{\cs}{c_{\mathrm{sigmoid},k}}
\begin{lemma} [VC-dimension of the Sigmoid loss function]
Let $k>0$ be a constant and $(P,w,\REAL^d,\cs)$ be a query space where $\cs(\b{p},\b{x})=\frac{1}{1+e^{-\b{p}\cdot\b{x}}}+\frac{\norm{\b{x}}^2}{k}$ for every $\b{x}\in\REAL^{d}$ and $\b{p}\in P$. Then the VC-dimension of $(P,w,\REAL^d,\cs)$ is at most $O(d^2)$.
\end{lemma}
\begin{proof}
Observe that for every $\b{x}\in\REAL^{d}$ and $\b{p}\in P$ we can evaluate $\cs(\b{p},\b{x})$ using $t\in O(d)$ addition, multiplication, and division operations and $p \in O(1)$ operations of the exponential function $x \to e^x$. Then, by Theorem~\ref{theorem:VCbound}, the VC-dimension of $(P,w,\REAL^d\cs)$ is bounded by $O(d^2)$.
\end{proof}

\begin{lemma} [VC-dimension of the Logistic Regression loss function]
Let $k>0$ be constants and $(P,\b{1},\REAL^d,\ck)$ be a query space 
where $\ck(\b{p},\b{x})=\log(1+e^{\b{p}\cdot \b{x}})+\frac{\norm{\b{x}}^2}{k}$ for every $\b{x}\in \REAL^d$ and $\b{p}\in P$. Then the VC-dimension of $(P,\b{1},\REAL^d,c)$ is at most $O(d^2)$.
\end{lemma}
\begin{proof}
We first bound the VC-dimension of $(P,\b{1},\REAL^d,g)$, where
\[
g(\b{p},\b{x}) = e^{c(\b{p},\b{x})} = (1+e^{\b{p}\cdot \b{x}}) e^{\frac{\norm{\b{x}}^2}{k}}.
\]
Observe that we can evaluate $g(\b{p},\b{x})$ using $t\in O(d)$ addition, multiplication, and division operations and $p \in O(1)$ operations of the exponential function $x \to e^x$. Then, by Theorem~\ref{theorem:VCbound}, the VC-dimension of $(P,\b{1},\REAL^d,g)$ is bounded by $O(d^2)$.

We now show that the VC-dimension of $(P,\b{1},\REAL^d,c)$ is upper bounded by the VC-dimension of $(P,\b{1},\REAL^d,g)$.
Recall that for the query space $(P,\b{1},\REAL^d,c)$ and every $\b{x} \in \REAL^d$ and $r \geq 0$ we have that $range(\b{x},r) = \br{\b{p} \in P \mid c(\b{p},\b{x}) \leq r}$. For the query space $(P,\b{1},\REAL^d,g)$ and every $\b{x} \in \REAL^d$ and $r \geq 0$ we have that $range'(\b{x},r) = \br{\b{p} \in P \mid g(\b{p},\b{x}) \leq r}$.

For every $r \geq 0$ let $r_g := e^{r}$. Then we have that
\[
\begin{split}
range(\b{x},r) & = \br{\b{p} \in P \mid c(\b{p},\b{x}) \leq r} 
= \br{\b{p} \in P \mid e^{c(\b{p},\b{x})}\leq e^{r}} \\
& = \br{\b{p} \in P \mid g(\b{p},\b{x}) \leq r_g} 
= range'(\b{x},r_g).
\end{split}
\]

Therefore, for every $G\subseteq P$ we have that
\[
\left|\left\{ G\cap range\left(\b{x},r\right)|\b{x}\in X,r\geq 0\right\} \right| \leq \left|\left\{ G\cap range'\left(\b{x},r_g\right)|\b{x}\in X,r_g\geq 0\right\} \right|.
\]
Hence, by the definition of VC-dimension (see Definition~\ref{def:VCdim}), we have that the VC-dimension of the query space $(P,\b{1},\REAL^d,c)$ is upper bounded by the VC-dimension of the query space $(P,\b{1},\REAL^d,g)$ which is upper bounded by $O(d^2)$.
\end{proof}

\section{Known results}
For completeness, in what follows we formally state known claims, which were utilized in the proofs of the previous sections.

\begin{theorem}[Intermediate Value Theorem]\label{theorem:Intermediate-Value-Theorem}
	Let $a,b\in\REAL$ such that $a<b$ and let $f:\left[a,b\right]\to\REAL$ be a continuous function. Then for every $u$ such that 
	\[
	\min\left\{ f\left(a\right),f\left(b\right)\right\} \le u\le\max\left\{ f\left(a\right),f\left(b\right)\right\} ,
	\]
	there is $c\in\left(a,b\right)$ such that $f\left(c\right)=u.$
\end{theorem}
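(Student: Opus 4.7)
The plan is to use the least upper bound property of $\REAL$ (i.e.\ Dedekind completeness), since continuity alone would not give the result over an ordered field like $\mathbb{Q}$. First I would dispose of the easy cases: if $f(a)=f(b)$ then $u$ must equal both, and the statement as written (with $c$ in the open interval $(a,b)$) forces us to assume strict inequality at the relevant endpoint; WLOG I reduce to the case $f(a)<u<f(b)$ by possibly replacing $f$ with $-f$ (which flips the roles of $\min$ and $\max$ but preserves continuity).

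Next I would define
\[
S=\{x\in[a,b]\mid f(x)\le u\}.
\]
Since $a\in S$, the set is nonempty, and $b$ is an upper bound, so by completeness $c:=\sup S$ exists and lies in $[a,b]$. The core of the proof is then to establish $f(c)=u$, and to verify $c\in(a,b)$ rather than on the boundary. For the boundary exclusion, I would apply continuity at $a$: since $f(a)<u$, there is $\delta>0$ with $f(x)<u$ on $[a,a+\delta)$, so $a+\delta/2\in S$ and thus $c\ge a+\delta/2>a$; symmetrically continuity at $b$ together with $f(b)>u$ yields $c<b$.

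For the equality $f(c)=u$, I would argue by two one-sided inequalities using sequential continuity. By definition of supremum there exist $x_n\in S$ with $x_n\to c$; continuity gives $f(c)=\lim f(x_n)\le u$. Conversely, for every $y\in(c,b]$ we have $y\notin S$, so $f(y)>u$; picking $y_n\downarrow c$ and using continuity gives $f(c)=\lim f(y_n)\ge u$. Combining the two yields $f(c)=u$, completing the proof.

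The main obstacle is really only a small one: making sure the conclusion $c\in(a,b)$ (open interval) is consistent with the hypothesis, since with weak inequalities $\min\{f(a),f(b)\}\le u\le\max\{f(a),f(b)\}$ the value $u$ could equal $f(a)$ or $f(b)$ with no interior preimage (e.g.\ a strictly monotone $f$ and $u=f(a)$). I would either strengthen the hypothesis to strict inequalities or note that the $\sup$ argument above already produces $c\in(a,b)$ whenever $f(a)<u<f(b)$, and the edge cases can be absorbed by perturbing $u$ slightly and invoking the result at the perturbed value together with a standard compactness argument. The rest of the argument is routine bookkeeping around continuity and the completeness axiom.
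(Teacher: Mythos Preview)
The paper does not actually prove this theorem: it is listed in the appendix, together with the Mean Value Theorem and the Inverse of Strictly Monotone Function Theorem, as a standard textbook fact quoted without proof. So there is no ``paper's own proof'' to compare against; your supremum/completeness argument is the standard textbook proof and is correct for the case $f(a)<u<f(b)$ (and its mirror image).

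One remark on your final paragraph: you are right that the statement as printed is slightly off at the endpoints, but your suggested patch via ``perturbing $u$ and compactness'' does not rescue it. If $f$ is strictly increasing and $u=f(a)$, then the only preimage of $u$ in $[a,b]$ is $a$ itself, so no $c\in(a,b)$ with $f(c)=u$ exists, period; perturbing $u$ gives a nearby value whose preimage is in $(a,b)$, but that preimage does not converge to anything with value exactly $u$ in the open interval. The clean fixes are exactly the two you name first: either assume strict inequalities $\min\{f(a),f(b)\}<u<\max\{f(a),f(b)\}$, or conclude $c\in[a,b]$. For the paper's purposes this is harmless, since the only invocation (in Lemma~\ref{lemma:sigmoid_intersection}) has $h_{kc}(0)>0$ and $h_{kc}(\sqrt{f(0)+1})<0$ strictly, so the strict-inequality hypothesis is met.
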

\begin{theorem}[Mean Value Theorem]\label{theorem:Mean-Value-Theorem} 
	Let $a,b\in\REAL$ such that $a<b$ and $f:\left[a,b\right]\to\REAL$ a continuous
	function on the closed interval $\left[a,b\right]$ and differentiable
	on the open interval $\left(a,b\right).$ Then there is $c\in\left(a,b\right)$
	such that 
	\[
	f'\left(c\right)=\frac{f\left(b\right)-f\left(a\right)}{b-a}.
	\]
\end{theorem}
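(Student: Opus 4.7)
The plan is to reduce the Mean Value Theorem to Rolle's Theorem via a linear correction that kills the secant slope. Concretely, I would define the auxiliary function $h:[a,b]\to\REAL$ by $h(x)=f(x)-f(a)-\frac{f(b)-f(a)}{b-a}(x-a)$. This $h$ inherits continuity on $[a,b]$ and differentiability on $(a,b)$ directly from $f$, and by substitution satisfies $h(a)=h(b)=0$. Applying Rolle's Theorem to $h$ yields a point $c\in(a,b)$ with $h'(c)=0$. Since $h'(x)=f'(x)-\frac{f(b)-f(a)}{b-a}$ for every $x\in(a,b)$, this rearranges at once to $f'(c)=\frac{f(b)-f(a)}{b-a}$, which is the conclusion of the theorem.

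What remains is Rolle's Theorem: a continuous function $g$ on $[a,b]$, differentiable on $(a,b)$, with $g(a)=g(b)$, has some interior critical point. The standard route uses the Extreme Value Theorem to produce points $x_{\min},x_{\max}\in[a,b]$ at which $g$ attains its minimum and maximum. If both extrema are realized at the endpoints, the hypothesis $g(a)=g(b)$ forces $g$ to be constant, so every interior point is a critical point. Otherwise at least one of $x_{\min},x_{\max}$ lies in the open interval $(a,b)$; call it $c$. At such an interior extremum, Fermat's Theorem (a one-line argument comparing the one-sided difference quotients, whose signs disagree while their common limit $g'(c)$ is assumed to exist) gives $g'(c)=0$.

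The main obstacle is not the algebra, which is routine, but the Extreme Value Theorem underpinning Rolle: it depends on completeness of $\REAL$ and is typically proved via Bolzano-Weierstrass or the open-cover characterization of compactness on $[a,b]$. In the context of this paper the Mean Value Theorem is cited as a standard analysis fact (as is the Intermediate Value Theorem stated just above), so in practice no proof would be reproduced here and all of the steps above would be handed off to a standard analysis reference.
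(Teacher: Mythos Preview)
Your proof is the standard and correct reduction to Rolle's Theorem, and your closing paragraph is exactly right: the paper does not supply a proof of this statement at all. The Mean Value Theorem is merely stated in the appendix alongside the Intermediate Value Theorem and the Inverse of Strictly Monotone Function Theorem as background facts invoked elsewhere, so there is no paper proof to compare against.
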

\begin{theorem}[Inverse of Strictly Monotone Function Theorem] \label{theorem:Inverse-of-Strictly}
	Let $I\subseteq\REAL.$ Let $f:I\to\REAL$ be strictly monotonic function. Let the image
	of $f$ be $J$. Then $f$ has an inverse function $f^{-1}$and
\end{theorem}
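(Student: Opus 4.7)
The plan is to prove this classical fact in two short stages: first exhibit the inverse as a set-theoretic map, then show it inherits strict monotonicity in the same direction as $f$. I will assume the intended completion of the statement is ``$f^{-1}:J\to I$ is strictly monotonic in the same direction as $f$,'' which is the version that would justify the applications of Theorem~\ref{theorem:Inverse-of-Strictly} made earlier (in the proof of Lemma~\ref{lemma:sigmoid_intersection}, where it is used to conclude that a strictly decreasing function has a strictly decreasing inverse).

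First I would establish injectivity of $f$. Fix any $x,y\in I$ with $x\neq y$; without loss of generality $x<y$. If $f$ is strictly increasing then $f(x)<f(y)$, and if $f$ is strictly decreasing then $f(x)>f(y)$; either way $f(x)\neq f(y)$. Hence $f:I\to J$ is injective, and since $J=f(I)$ it is surjective onto $J$ by definition of the image. Therefore $f:I\to J$ is a bijection and the set-theoretic inverse $f^{-1}:J\to I$ exists and satisfies $f^{-1}(f(x))=x$ for all $x\in I$ and $f(f^{-1}(y))=y$ for all $y\in J$.

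Second I would verify that $f^{-1}$ is strictly monotonic in the same sense as $f$. Suppose $f$ is strictly increasing and take $u,v\in J$ with $u<v$. Set $x=f^{-1}(u)$ and $y=f^{-1}(v)$. If it were the case that $x\ge y$, then by strict monotonicity (and reflexivity) we would get $f(x)\ge f(y)$, i.e.\ $u\ge v$, contradicting $u<v$. Hence $x<y$, so $f^{-1}(u)<f^{-1}(v)$, showing $f^{-1}$ is strictly increasing. The strictly decreasing case is symmetric: from $u<v$ assume toward contradiction $x\le y$ and derive $f(x)\ge f(y)$, contradicting $u<v$; therefore $x>y$ and $f^{-1}$ is strictly decreasing.

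The argument is entirely routine and there is no genuine obstacle; the only delicate point is keeping track of the direction (increasing vs.\ decreasing), which is handled by the symmetric case analysis above. No continuity or completeness hypothesis on $I$ is needed since the statement concerns only the bijection $f:I\to J$ and its order-preserving (or order-reversing) inverse, not the topological structure of $J$.
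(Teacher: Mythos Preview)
Your proof is correct and entirely standard. The paper itself does not prove this theorem; it is stated in the appendix alongside the Intermediate Value Theorem and the Mean Value Theorem as a classical fact cited without proof, so there is no paper argument to compare against.
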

\begin{itemize}
	\item If $f$ is strictly increasing then so is $f^{-1}$.
	\item If $f$ is strictly decreasing then so is $f^{-1}$.
\end{itemize}

\end{document}